\documentclass[11pt]{article}

\usepackage{hyperref}

\usepackage{adjustbox}
\usepackage{amsthm}
\usepackage{algorithm}
\usepackage{algpseudocode}
\usepackage{placeins}

\usepackage{enumitem}

\theoremstyle{definition}
\newtheorem{definition}{Definition}[section]
\newtheorem{corollary}{Corollary}[section]
\newtheorem{theorem}{Theorem}[section]
\newtheorem{lemma}{Lemma}[section]
\usepackage{newtxtext,newtxmath}

\usepackage{wrapfig}
\usepackage[utf8]{inputenc} %
\usepackage[T1]{fontenc}    %
\usepackage{hyperref}       %
\usepackage{url}            %
\usepackage{booktabs}       %
\usepackage{amsmath}
\usepackage[capitalize]{cleveref}
\usepackage{amsfonts}       %
\usepackage{nicefrac}       %
\usepackage{microtype}      %
\usepackage{xcolor}         %
\usepackage{subcaption}
\usepackage{graphicx} 
\usepackage{comment} 
\usepackage{etoolbox}
\usepackage{cancel}
\usepackage{soul}
\usepackage{multirow}
\usepackage{float}
\usepackage{wrapfig}
\usepackage{adjustbox}
\usepackage{pifont}
\usepackage{makecell}

\usepackage{mathtools}

\DeclarePairedDelimiterX\braket[2]{\langle}{\rangle}{#1\,\delimsize\vert\,\mathopen{}#2}

\usepackage{tabularx}
\definecolor{lb}{RGB}{31,119,180}
\usepackage{xcolor}
\usepackage{tcolorbox}
\newtcolorbox{mybox}[1]{colback=lb!5!white,colframe=lb!70!black,fonttitle=\bfseries,title=#1}
\usepackage{colortbl}
\tcbuselibrary{skins}
\tcbset{tab2/.style={enhanced,fonttitle=\bfseries,
colback=white,colframe=gray,colbacktitle=white,
coltitle=black,center title}}
\usepackage{pifont}
\newcommand{\cmark}{\textcolor{green!60!black}{\ding{51}}}  %
\newcommand{\xmark}{\textcolor{red!75!black}{\ding{55}}}    %

\date{}
\title{Heat Kernel Goes Topological}

\author{
  Maximilian Krahn \\
  Department of Computer Science,\\ Aalto University, Finland \\
  \texttt{mk2510.github.io}\\ 
  \\
  Vikas Garg \\
  YaiYai Ltd and Aalto University\\
  \texttt{vgarg@csail.mit.edu}\\
}

\begin{document}
\maketitle
\begin{abstract}
Topological neural networks have emerged as powerful successors of graph neural networks. However, they typically involve higher-order message passing, which incurs significant computational expense. We circumvent this issue with a novel topological framework that introduces a Laplacian operator on combinatorial complexes (CCs), enabling efficient computation of heat kernels that serve as node descriptors. Our approach captures multiscale information and enables permutation-equivariant representations, allowing easy integration into modern transformer-based architectures.
Theoretically, the proposed method is maximally expressive because it can distinguish arbitrary non-isomorphic CCs. Empirically, it significantly outperforms existing topological methods in terms of computational efficiency. Besides demonstrating competitive performance with the state-of-the-art descriptors on standard molecular datasets, it exhibits superior capability in distinguishing complex topological structures and avoiding blind spots on topological benchmarks. Overall, this work advances topological deep learning by providing expressive yet scalable representations, thereby opening up exciting avenues for molecular classification and property prediction tasks.

\end{abstract}
\section{Introduction}

The intersection of topology and deep learning has unlocked new possibilities for encoding complex structural information in data. Traditional graph-based neural networks have proven effective in learning representations for structured data \cite{kipf2016semi, maron2019provably}, yet they often struggle with capturing higher-order relationships \cite{besta2024demystifying}. In contrast, combinatorial complexes provide a more expressive framework for modelling such intricate structures, extending beyond pairwise relationships to encode higher-order interactions \cite{hajij2022topological}.
Current topological neural networks often rely on higher-order message passing protocols to capture complex structural relationships \cite{verma2024topological, eitan2024topological}. While effective in theory, these approaches are typically computationally expensive (cf. \cref{sec:experiment}) and offer limited expressive power in distinguishing non-isomorphic structures \cite{eitan2024topological}.
\begin{table}[!t]
\centering
\begin{minipage}{\linewidth}
\centering
\small
    
\begin{tcolorbox}[tab2,tabularx={llccccccll}, boxrule=1pt,top=0.9ex,bottom=0.9ex,colbacktitle=lb!15!white,colframe=lb!70!white]
\begin{minipage}[t]{0.48\textwidth}
\vspace{-10pt}
\centering
\caption*{\textbf{Recent methods for relational data}}
\begin{tabular}{@{}lcccc@{}}
\toprule
\textbf{Method} & \textbf{SC} & \textbf{R2} & \textbf{IS} & \textbf{EQ}  \\
\midrule
CIN  \cite{hajij2020cell}       & \cmark & \xmark & \xmark & \cmark \\
SMCN \cite{eitan2024topological}  & \cmark & \cmark & \xmark & \cmark \\
MCN \cite{eitan2024topological}   & \xmark & \cmark & \cmark & \cmark \\
TopNet \cite{verma2024topological}     & \xmark & \xmark & \xmark &  \cmark \\
iGN    \cite{cai2022convergence}   & \cmark & \xmark & \xmark &  \cmark \\
IEGN      \cite{maron2018invariant}  & \cmark & \xmark & \xmark &  \cmark\\
\midrule
\textbf{TopoHKS (ours)}    & \cmark & \cmark & \cmark & \cmark\\
\bottomrule
\end{tabular}
\end{minipage}
\hfill
\begin{minipage}[t]{0.48\textwidth}
\textbf{Contributions} \\
\smallskip
\textbf{Section 3:}
\begin{itemize}[leftmargin=1.5em]
    \item Defining the Laplacian for CCs
    \item HKS descriptor for CC cells
\end{itemize}

\textbf{Section 4:}
\begin{itemize}[leftmargin=1.5em]
    \item Expressivity of our approach; can distinguish non-isomorphic structures
\end{itemize}

\textbf{Section 5:}
\begin{itemize}[leftmargin=1.5em]
    \item Experiments for expressivity on torus data, scalability, graph and CC classification
\end{itemize}
\end{minipage}
\end{tcolorbox}

    \caption{\textbf{Overview of recent methods for relational data and summary of our contributions}. SC: Scalability, R2: Distinguishability for Rank 2, IS: Isomorphism, EQ: Equivariance.}
    \label{tab:overview}
\end{minipage}
\end{table}
Motivated by these limitations, we propose TopoHKS, a scalable and expressive deep learning framework to distinguish between combinatorial complexes and their isomorphisms. Our method leverages Heat Kernel Signatures (HKS) \cite{sun2009concise} on combinatorial complexes, yielding continuous, multi-scale descriptors that capture the topological neighbourhood of each cell through a diffusion-based perspective. These descriptors encode rich structural information and serve as input to a transformer network, enabling us to model complex interactions without relying on expensive higher-order message-passing mechanisms.
Our contributions can be summarised as follows:

\begin{enumerate}
\item Topological Embeddings for Transformers: We introduce a novel combinatorial complex embedding that is permutation-invariant and computationally efficient, overcoming limitations in existing approaches.
\item Theoretical Expressiveness: We establish our framework's fundamental topological and spectral properties, demonstrating its ability to distinguish non-isomorphic complexes and outperform Weisfeiler-Lehman (WL)-based methods in expressivity.
\item Empirical Performance: Our method achieves state-of-the-art results on molecular prediction benchmarks (MolHIV, Protein) and topological structure datasets, surpassing existing approaches in distinguishing combinatorial complexes.
\item Scalability and Efficiency: Unlike traditional higher-order message passing methods, our approach scales efficiently to large combinatorial complexes.
\end{enumerate}

By bridging topological data analysis and deep learning, our work paves the way for more expressive, computationally efficient, and theoretically grounded approaches to learning on structured data. As shown in \cref{tab:overview}, we formally introduce our framework, establish its theoretical foundations, and validate its performance through extensive experiments. We will publish the code after acceptance. \looseness=-1

\section{Background}

\begin{definition}
    A \textit{combinatorial complex} (CC) is a triple \( (S, \mathcal{X}, \operatorname{rk}) \) consisting of a set \( S \), a subset \( \mathcal{X} \) of \( \mathcal{P}(S) \setminus \{\emptyset\} \), and a function \( \operatorname{rk}: \mathcal{X} \to \mathbb{Z}_{\geq 0} \) with the following properties:
\begin{enumerate}
    \item For all \( s \in S \), \( \{s\} \in \mathcal{X} \), and
    \item The function \( \operatorname{rk} \) is order-preserving, which means that if \( x, y \in \mathcal{X} \) satisfy \( x \subseteq y \), then \( \operatorname{rk}(x) \leq \operatorname{rk}(y) \).
\end{enumerate}

Elements of \( S \) are called \textit{entities} or \textit{vertices}, elements of \( \mathcal{X} \) are called \textit{relations} or \textit{cells}, and \( \operatorname{rk} \) is called the \textit{rank function} of the CC.

\end{definition}

This definition was developed in \cite{hajij2022topological}. When $C$ is a combinatorial complex and we write $C^i$, we mean all cells in $C$ of rank $i$. We can also interpret graphs as combinatorial complexes. Here, graph nodes are cells of rank $0$ and edges are cells of rank $1$.

\begin{definition} (\textbf{Isomorphism})
    Two combinatorial complexes \( C = (S, \mathcal{X}, \operatorname{rk}) \) and \( C' = (S', \mathcal{X}', \operatorname{rk}') \) are said to be \textbf{isomorphic} if there exists a bijective map \( \phi: \mathcal{X} \to \mathcal{X}' \) that preserves both the incidence structure and the rank function. 
\end{definition}

\begin{definition} (\textbf{Higher order incidence matrix})
     The incidence matrix represents the relationships between cells of different ranks. Given a combinatorial complex \( (S, \mathcal{X}, \operatorname{rk}) \), the incidence matrix encodes the boundary relationships between cells of consecutive ranks.  

Formally, let \( \mathcal{X}_k \) denote the set of cells in \( \mathcal{X} \) with rank \( k \). The incidence matrix \( \delta_k \in \mathbb{R} ^{ |\mathcal{X}_{0}| \times |\mathcal{X}_k|} \) has each entry defined as follows:

\[
\delta_k(x, y) =
\begin{cases}
\pm 1, & \text{if } x \subset y,\\
0, & \text{otherwise}.
\end{cases}
\]
More specifically let $x_1, \dots x_n \subset y$ be all cells of $y$, then there exists exactly one $x_i$ s.t. $\delta_k(x_i, y) = 1$ and $\forall x_j $ with $j \neq i$ $\delta_k(x_j,y) = -1$.
This matrix can be interpreted as the \textbf{discrete derivative} operator at rank 0 for combinatorial complexes.
\end{definition}

\paragraph{Hodge Laplacian} On cellular complexes, the Laplacian is known as the \emph{Hodge Laplacian}~\cite{hoppe2024representing}. It is defined as follows. For the next paragraph, let $C$ be a cell complex.  
Let \( \partial_k : C^k \to C^{k-1} \) denote the boundary operator, which maps each \( k \)-cell to its \((k{-}1)\)-dimensional faces. Its transpose, \( \partial_k^\top \), is the \emph{coboundary operator}—also interpretable as the incidence matrix from rank \( k \) to \( k{-}1 \). The \emph{\( k \)-th Hodge Laplacian} is then given by:
\begin{align}    
\Delta_k = \partial_{k+1} \partial_{k+1}^\top + \partial_k^\top \partial_k
\end{align}

This operator acts on \( k \)-cochains (real-valued functions defined on \( k \)-cells) and captures both upward and downward adjacencies, reflecting how cells are connected through lower- and higher-dimensional neighbours. A more detailed definition is provided in \cite{hoppe2024representing}.

While this definition extends naturally to \emph{combinatorial complexes}, their general structure can limit the boundary operators' ability to capture higher-order relationships fully. We illustrate this limitation in \cref{thm:hodge_expressiveness}, showing the restricted expressiveness of classical Laplacians in distinguishing specific structures.

\section{Method}
To address this, we introduce a new approach that defines a Laplacian directly on combinatorial complexes. Next, we construct a heat kernel signature for each node (i.e., a 0-dimensional cell), which serves as the input embedding to our deep learning pipeline. We then describe the whole model setup, while the theoretical and computational benefits of our method are discussed in \cref{Sec:computability}.

\subsection{Laplacian on Combinatorial complexes}

Laplacians are well-defined operators on graphs \cite{hein2007graph} and geometric shapes \cite{ovsjanikov2008global}. However, extending this notion to combinatorial complexes introduces several challenges not present in graphs or simplicial complexes:

\begin{enumerate}
\item \textbf{Single-rank connectivity:} Graphs and manifolds have only one type of connecting element—edges or surfaces—which naturally defines the domain for the Laplacian. In contrast, combinatorial complexes involve connections across multiple ranks, complicating the application of the Laplacian in specific locations and contexts.

\item \textbf{Well-defined hierarchical structure:} In simplicial (chain) complexes, each cell has a fixed rank, and functions are typically defined on cells of a single dimension. The derivative (via boundary or coboundary maps) projects onto adjacent ranks, leading to a natural Hodge Laplacian formulation \cite{forman1998witten}. Combinatorial complexes, however, do not enforce such strict rank stratification, making derivative operations less straightforward to define.
\end{enumerate}

Given these challenges, we aim to design a Laplace operator for combinatorial complexes that is symmetric and positive definite, expressible as a single unified operator, and naturally extends the standard graph Laplacian. Moreover, the operator should meaningfully define the smoothness of functions on the complex and be uniquely determined by the structure and constraints of the problem.

A key aspect of our Laplacian is that it is defined in terms of rank-0 cells, reflecting that downstream tasks typically operate on these base-level elements. This choice ensures that higher-order interactions are captured in their influence on rank-0 cells, with higher-rank cells serving as contextual structures that encode complex dependencies between them.

Given those constraints, we construct a Laplacian on combinatorial complexes and prove its properties.
\begin{definition}\textbf{Combinatorial Complex Laplacian}
\label{def:cc_laplacian}
Let $\mathbf{\delta_i}$ be the incidence matrix from rank $0$ to rank $i$. Furthermore, let $R$ be the maximum rank of the combinatorial complex. Let's define a set $\mathcal{B}$ of size $R$. With the condition that $\forall \mathcal{B'}, \mathcal{B''} \subset \mathcal{B}$ iff $\sum_{b \in \mathcal{B'}}b = \sum_{b \in \mathcal{B}''}b$ than $\mathcal{B'}=\mathcal{B''}$. Meaning all possible subsets-sums are distinct. Such a set can be for instance: $\{2^{-1}, 2^{-2}, \dots, 2 ^ {-n} \}$. We then define the Laplacian with $b_i \in \mathcal{B} \quad \forall i \in \{1,2, \dots, R\}$:
\begin{align}
    \mathbf{L}:= \sum_{i=1}^R  b_i\mathbf{\delta_i} \mathbf{\delta_i}^{\top},
\end{align}
\end{definition}

\subsection{Heat Kernels Descriptors on Combinatorial Complexes}
Given the CC Laplacian definition, we now construct node descriptors based on the HKS algorithm.

\paragraph{Heatkernel on topological structures}
As established in \cite{sun2009concise}, the heat kernel on a compact manifold \( M \) admits the following eigendecomposition:
\begin{equation}
    k_t(x,y) = \sum_{i=0}^{\infty} e^{-\lambda_i t} \phi_i(x) \phi_i(y),
\end{equation}
where \( \lambda_i \) and \( \phi_i \) are the \( i \)th eigenvalue and the \( i \)th eigenfunction of the Laplace–Beltrami operator, respectively.

The difference is that in this case, we use the Laplacian on Combinatorial Complexes instead of the Laplace-Beltrami operator. Further, we only consider the kernel for cells of rank 1.
This creates the following kernel matrix:
\begin{equation}
    K_t = \exp(-tL),
\end{equation}
\begin{wrapfigure}[10]{r}{0.35\linewidth}
\vspace{-12pt}
    \centering
    \includegraphics[width=\linewidth]{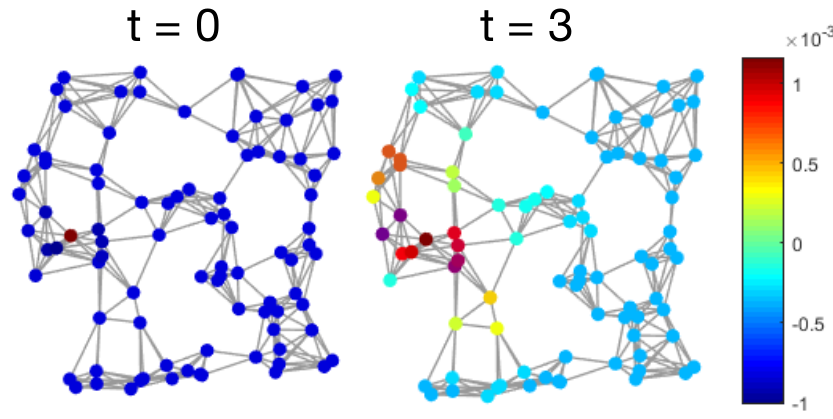}
    \caption{\small Qualitative example of how the Heat Kernel captures local and global information. %
    }
    \label{fig:HeatDiffusion}
\end{wrapfigure}
\begin{wrapfigure}[13]{r}{0.4\linewidth}
\vspace{-10pt}
    \centering
    \includegraphics[width=\linewidth]{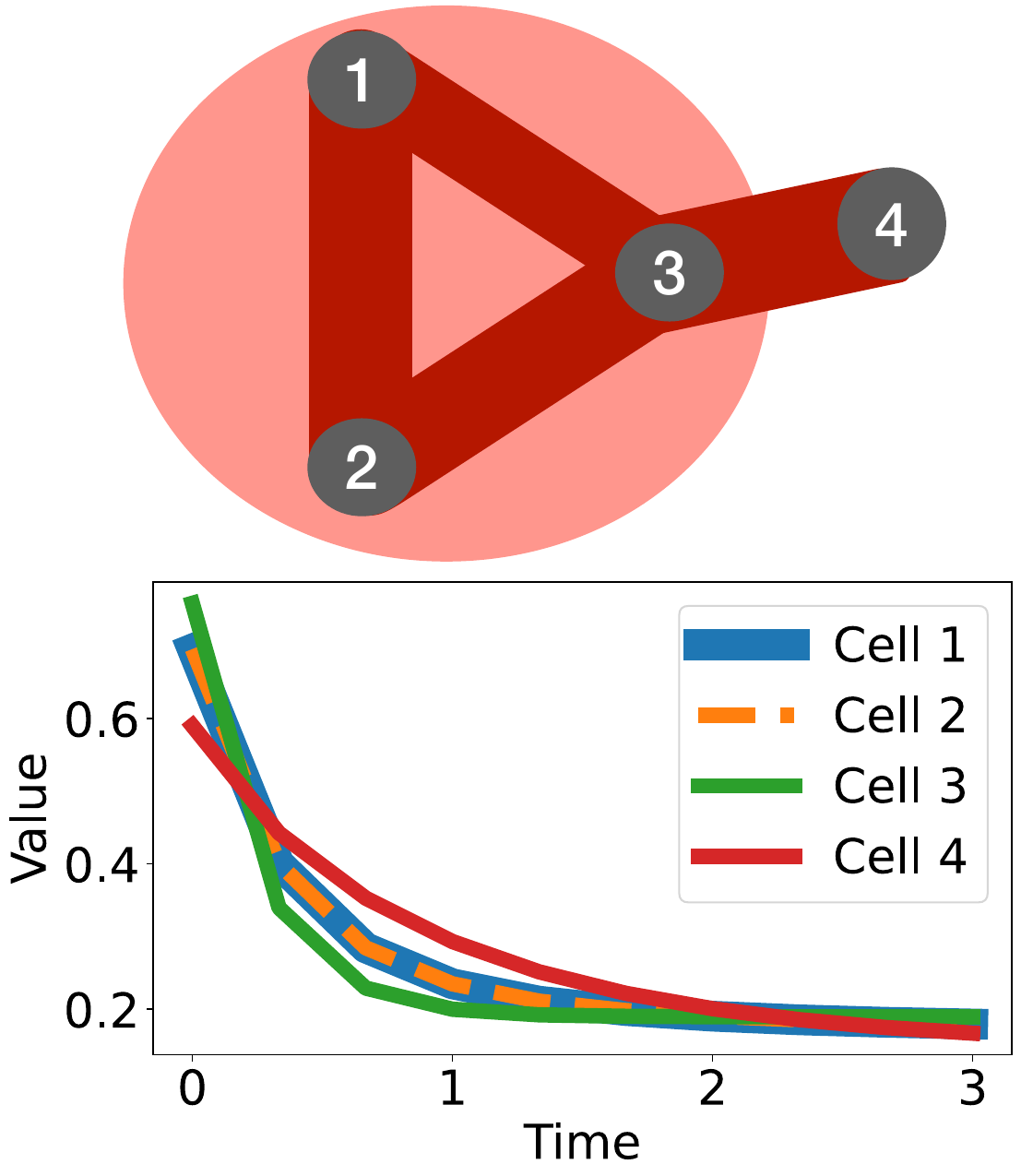}
    \caption{\small Qualitative example of how the HKS descriptor differs for non-isometric cells of rank 0. %
    }
    \label{fig:HKSSIG}
\end{wrapfigure}

Let \( t \) denote the diffusion time and \( L \) the Laplacian of the combinatorial complex. For efficient computation, we use the spectral decomposition of \( L \), given by  
\(
L = \Phi^\top \mathrm{diag}(\lambda_1, \dots, \lambda_n) \Phi,
\)
which allows us to express the heat kernel as  
\(
K_t = \Phi^\top \mathrm{diag}(e^{-t\lambda_1}, \dots, e^{-t\lambda_n}) \Phi.
\)
Let \( c \) be a rank-0 cell in the combinatorial complex. For notational convenience, we define \( K_t(c) := e_c^\top K_t e_c \), where \( e_c \in \mathbb{R}^n \) is the standard basis vector corresponding to cell \( c \).

As illustrated in \cref{fig:HeatDiffusion}, the kernel captures increasingly global structural information as \( t \) increases, reflecting local neighbourhoods at small \( t \) and global connectivity at large \( t \). Motivated by this behaviour, we define the descriptor as a multi-scale embedding based on diffusion at time points \( t_1, \dots, t_d \).

\begin{definition} \textbf{Heat Kernel descriptor}
\label{def:hks_cc}
The \emph{descriptor} of a cell \( c \) of rank $0$ in a combinatorial complex \( C \) is a vector in \( \mathbb{R}^d \). Further we have $d$ times named $t_1, \dots, t_d$. Let $K_t$ be the heat kernel matrix with the variable time parameter $t$. The descriptor is defined as:
\begin{align}    
\mathrm{HKS}_{t_1, \dots, t_d}(c) = [K_{t_1}(c), \dots, K_{t_d}(c)],
\end{align}
\end{definition}
In \cref{fig:HKSSIG}, it is shown how the descriptors capture the different topological neighbourhoods of the four cells of rank 0.

\subsection{Training}

The training pipeline aims to learn a single feature vector representing the entire combinatorial complex. We first compute the Heat Kernel Signature (HKS) for all rank-0 cells, as defined in \cref{def:hks_cc}, and concatenate it with their existing features to form enriched input representations. These are enhanced with positional encodings and passed through a linear embedding layer to align with the transformer architecture.

Let the input be \( \mathbf{X} \in \mathbb{R}^{B \times N \times D} \), where \( B \) is the batch size, \( N \) the number of rank-0 cells, and \( D \) the feature dimension. Let \( \mathbf{G} \in \mathbb{R}^{D \times E} \) be a basis matrix with \( E \) the basis dimension. The embedding $\mathbf{O}$ is computed as:
$\mathbf{O} = \begin{bmatrix} \sin(\mathbf{X} \cdot \mathbf{G}) & \cos(\mathbf{X} \cdot \mathbf{G}) \end{bmatrix}
$
These encoded features are processed by \( n \) self-attention layers, followed by a multi-layer perceptron (MLP) to produce the global feature vector. An overview of the architecture is shown in \cref{fig:training_pipeline}.

\paragraph{MLP Mixer Backbone}
Furthermore, we propose using the MLP Mixer as our learning backbone. Unlike transformer architectures that rely heavily on attention mechanisms, the MLP Mixer processes spatial information through alternating token-mixing and channel-mixing MLPs. This approach maintains computational efficiency while effectively capturing token-level interactions and feature representations. The token-mixing MLPs operate across spatial dimensions (treating each token as a channel), while channel-mixing MLPs process feature dimensions independently per token. This decomposition enables our model to learn spatial relationships without the quadratic complexity of self-attention. In our experiments, the MLP Mixer demonstrates comparable or superior performance to transformer-based approaches, particularly for tasks where global feature interactions are crucial, requiring fewer computational resources and exhibiting faster convergence during training.

\begin{figure}
    \centering
    \includegraphics[width=0.8\linewidth]{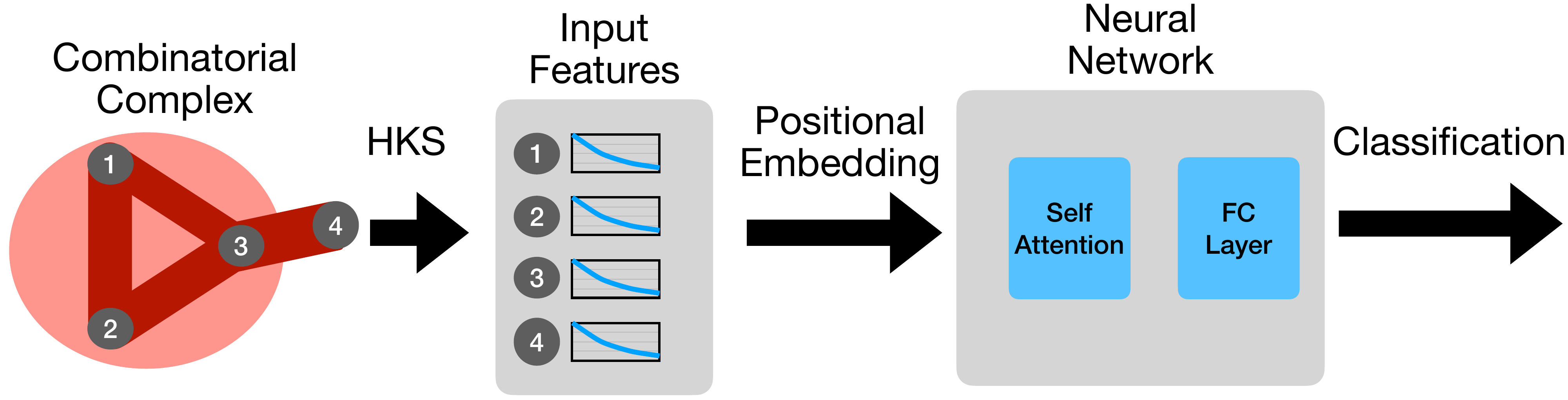}
    \caption{Showing a model setup for learning features. Input: Concatenation of cell features of rank $0$ and the calculated HKS Descriptors. Model: Transformer with positional encoding and $N$ self-attention layers. 
    }
    \label{fig:training_pipeline}
\end{figure}

\section{Theoretical Results}
To support our architectural design and choice of HKS descriptors, we now turn to the theoretical foundations of our framework. We first show that the proposed Laplacian operator satisfies the properties expected of a valid Laplacian and draw connections to the classical Hodge Laplacian. We then analyse the expressive power of our method, demonstrating its ability to capture complex structural information inherent in combinatorial complexes.
This section outlines key proofs, while complete derivations are presented in the Appendix.

\subsection{Laplacian Properties}
Graphs are a special case of  CCs with only rank-0 and rank-1 cells, where each rank-1 cell connects exactly two rank-0 cells. In such cases, the CC Laplacian should reduce to the standard graph Laplacian. \looseness=-1
\begin{corollary} (\textbf{Relationship between CC and Graph Laplacians})
For graphs, combinatorial complexes with rank 1, the Laplacian from \cref{def:cc_laplacian} is identical to the Laplacian defined on graphs.
\end{corollary}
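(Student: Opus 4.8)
The plan is to observe that a graph, viewed as a combinatorial complex, has maximum rank $R = 1$, so the defining sum in \cref{def:cc_laplacian} collapses to the single term $\mathbf{L} = b_1\, \delta_1 \delta_1^\top$. Since the set $\mathcal{B}$ then has size $R = 1$, the distinct-subset-sums condition is vacuous and we are free to take $b_1 = 1$ (alternatively, the statement holds up to this positive scalar, which we can match to whatever normalization of the graph Laplacian one prefers). It therefore remains only to identify $\delta_1 \delta_1^\top$ with the standard combinatorial graph Laplacian $\mathbf{D} - \mathbf{A}$, where $\mathbf{D}$ is the degree matrix and $\mathbf{A}$ the adjacency matrix.

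First I would recall the structure of $\delta_1 \in \mathbb{R}^{|\mathcal{X}_0| \times |\mathcal{X}_1|}$ from the incidence-matrix definition: each rank-1 cell (edge) contains exactly two rank-0 cells, and by the stated sign convention exactly one of them receives entry $+1$ and the other $-1$. Hence every column of $\delta_1$ is supported on two coordinates, with one entry $+1$ and one entry $-1$; this is precisely the signed vertex–edge incidence matrix of the graph.

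Next I would compute the entries of $\delta_1 \delta_1^\top$ directly. A diagonal entry indexed by a vertex $u$ equals $\sum_{e} \delta_1(u,e)^2$, which counts the edges incident to $u$, i.e.\ $\deg(u)$. An off-diagonal entry indexed by distinct vertices $u \neq v$ receives a contribution only from an edge containing both; for a simple graph there is at most one such edge $\{u,v\}$, contributing $(+1)(-1) = -1$, so the entry is $-1$ when $\{u,v\}$ is an edge and $0$ otherwise. Thus $\delta_1 \delta_1^\top = \mathbf{D} - \mathbf{A}$, and with $b_1 = 1$ we get $\mathbf{L} = \mathbf{D} - \mathbf{A}$, exactly the graph Laplacian.

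This is essentially a direct computation, so there is no serious obstacle; the only point requiring care is the sign bookkeeping — one must use that each edge column carries precisely one $+1$ and one $-1$, so that the product of the two nonzero entries in a column is $-1$ (yielding the correct negative off-diagonal entry) while each squared entry is $1$ (yielding the degree on the diagonal). A minor, cosmetic point is to state explicitly that the freedom in choosing $\mathcal{B}$ when $R = 1$ lets us reproduce the graph Laplacian on the nose rather than merely up to a scalar.
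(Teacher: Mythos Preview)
Your proof is correct and follows essentially the same approach as the paper: both identify the rank-1 incidence matrix $\delta_1$ with the signed vertex--edge incidence matrix of the graph and conclude that $\delta_1\delta_1^\top$ equals the graph Laplacian. Your version is in fact slightly more careful than the paper's, since you explicitly compute the diagonal and off-diagonal entries to obtain $\mathbf{D}-\mathbf{A}$ (whereas the paper simply cites the identity $L_G = I_G I_G^\top$), and you address the weight $b_1$ from the set $\mathcal{B}$, which the paper's proof omits entirely.
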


We further had the uniqueness requirement for CC Laplacians, which we show with the following theorem.
\begin{theorem}(\textbf{Uniqueness of Laplacians for CCs})
\label{them:uniqueness_of_laplacians}
Let \( L \) be the Laplacian of a combinatorial complex \( C \). The Laplacian of a combinatorial complex is uniquely determined. Moreover, if there exists an invertible orthogonal matrix \( \mathbf{\Pi} \in \mathbb{R}^{n \times n} \) such that  
\begin{align}
      L' = \mathbf{\Pi} L \mathbf{\Pi}^{\top},
\end{align}
  
then \( L' \) serves as the Laplacian of another combinatorial complex \( C' \), which is spectrally equivalent to \( C \), meaning there exists a bijective, unique mapping between the two combinatorial complexes, which makes them isomorphic.
\end{theorem}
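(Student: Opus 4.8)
The plan is to prove the two assertions separately: that \cref{def:cc_laplacian} yields a well-defined operator attached to $C$, and that this operator is a complete invariant of $C$ up to isomorphism in the precise sense of the orthogonal-conjugation statement. For the first (``uniquely determined''), I would note that once the weight set $\mathcal{B}=\{b_1,\dots,b_R\}$ is fixed, the only freedom in $\mathbf{L}=\sum_{i=1}^{R} b_i\,\delta_i\delta_i^{\top}$ is the sign convention in each incidence matrix $\delta_i$. Since $\delta_i\delta_i^{\top}$ is the Gram matrix of the columns of $\delta_i$, it is symmetric positive semidefinite and unchanged under flipping the sign of a whole column (each summand $\delta_i(x,z)\delta_i(y,z)$ acquires $(-1)^2$); for graphs each rank-$1$ cell has exactly two vertices, so the product is orientation-free, and in general one fixes a canonical reference orientation. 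With $b_i>0$ the weighted sum is again symmetric positive semidefinite, so $\mathbf{L}$ is canonically determined by $(C,\mathcal{B})$, which also supplies the positivity property used elsewhere.

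The engine for the main claim is a reconstruction lemma: $\mathbf{L}$ alone determines $C$ up to relabeling of rank-$0$ cells. For rank-$0$ indices $x\neq y$, the entry $\mathbf{L}_{xy}=\sum_{i=1}^{R} b_i\big(\delta_i\delta_i^{\top}\big)_{xy}$ aggregates, rank by rank, the incidence data ``$x$ and $y$ lie in a common rank-$i$ cell,'' and the diagonal $\mathbf{L}_{xx}$ records the analogous weighted count of rank-$i$ cells containing $x$. The defining property of $\mathcal{B}$ — all subset sums distinct — is exactly what lets a single scalar $\mathbf{L}_{xy}$ be decoded uniquely into the corresponding set of ranks. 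Because every cell of rank $\ge 1$ is, by the CC axioms, the set of rank-$0$ cells it contains, recovering all these incidences recovers $\mathcal{X}$ and $\operatorname{rk}$, hence $C$ up to a permutation of rank-$0$ labels.

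Now suppose $\mathbf{L}'=\mathbf{\Pi}\mathbf{L}\mathbf{\Pi}^{\top}$ with $\mathbf{\Pi}$ orthogonal and $\mathbf{L}'$ a CC Laplacian over the same $\mathcal{B}$. Then $\mathbf{L}$ and $\mathbf{L}'$ are isospectral, and both lie in the discrete set of CC Laplacians. I would argue that the only orthogonal maps carrying one CC Laplacian to another are signed permutations: the diagonal entry $(\mathbf{\Pi}^{\top}e_k)^{\top}\mathbf{L}(\mathbf{\Pi}^{\top}e_k)$ must be one of the prescribed weighted degrees for every $k$, and, combined with the rigidity of the off-diagonal subset-sum pattern from the reconstruction lemma, this should force each column of $\mathbf{\Pi}$ to be supported on a single coordinate, i.e. $\mathbf{\Pi}=\mathbf{D}_{\pm}P$ for a permutation $P$ and a diagonal sign matrix; the signs cancel in $\mathbf{\Pi}\mathbf{L}\mathbf{\Pi}^{\top}$ just as column flips cancelled above, so $\mathbf{L}'=P\mathbf{L}P^{\top}$. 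Applying the reconstruction lemma to both $\mathbf{L}$ and $\mathbf{L}'$, the permutation $P$ carries the incidence pattern of $C$ onto that of $C'$, hence induces a bijection $\phi:\mathcal{X}\to\mathcal{X}'$ preserving incidence and rank — an isomorphism — which is unique because it is fixed on rank-$0$ cells by $P$ and every higher cell is the set of its rank-$0$ members.

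The main obstacle — beyond checking that the per-rank incidence contributions stay within the range the subset-sum decoding can resolve — is the step forcing $\mathbf{\Pi}$ to be a signed permutation rather than a genuinely continuous orthogonal transformation. Isospectrality by itself is far too weak here (it does not even rule out cospectral non-isomorphic graphs, which are a special case of this setting), so the argument must genuinely exploit the arithmetic rigidity of $\mathcal{B}$ — the distinctness of subset sums — to show that the $\mathrm{O}(n)$-orbit of $\mathbf{L}$ meets the set of CC Laplacians only in the signed-permutation orbit. Making that rigidity argument airtight, rather than the routine bookkeeping in the reconstruction lemma, is where I expect the real work to lie.
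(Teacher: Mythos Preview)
Your treatment is substantially more careful than the paper's own argument, which essentially stops after your first paragraph. The appendix proof argues only that the incidence matrices $\delta_i$ are fixed by the complex (up to orientation and indexing), that each Gram matrix $\delta_i\delta_i^{\top}$ is therefore fixed, and hence that $L$ is ``unique up to orthogonal equivalence.'' It never attempts a reconstruction lemma, never argues that $\mathbf{\Pi}$ must reduce to a permutation, and never addresses the converse direction; the isomorphism conclusion is asserted by analogy with adjacency matrices.

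You have, however, correctly located the obstruction, and it is fatal rather than a technicality to be filled in. Take $R=1$: the CC Laplacian is $b_1$ times the ordinary graph Laplacian, and $\mathcal{B}=\{b_1\}$ carries no subset-sum rigidity whatsoever. Any pair of Laplacian-cospectral non-isomorphic graphs---these exist already among trees---yields CC Laplacians $L,L'$ with $L'=\mathbf{\Pi}L\mathbf{\Pi}^{\top}$ for some orthogonal $\mathbf{\Pi}$, both genuinely arising as CC Laplacians, yet with no isomorphism between the underlying complexes. So the second assertion of the theorem is false as stated, and no proof can succeed without either restricting $\mathbf{\Pi}$ to permutation matrices from the outset or weakening ``isomorphic'' to ``cospectral.'' Your hope that the arithmetic of $\mathcal{B}$ forces $\mathbf{\Pi}$ to be a signed permutation cannot be realised in the rank-$1$ case, and even for higher ranks the off-diagonal entries $(\delta_i\delta_i^{\top})_{xy}$ are signed \emph{counts} of common incident cells rather than $\{0,1\}$ indicators, so they need not lie in a range where unique subset-sum decoding applies. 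The paper's proof does not confront any of this; it tacitly equates ``same spectrum'' with ``isomorphic.'' Your flagging of cospectral graphs as the crux is the most valuable part of the proposal---it is precisely the point at which the stated theorem, not merely your proof strategy, breaks down.
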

The proof closely follows the following line of thought.
We argue it by constructing the Laplacian. The Laplacian is uniquely built as a weighted sum of the individual components. Hence, the Laplacian will be unique if all components are unique for the underlying combinatorial complex.
As the individual components are simply variations of adjacency matrices, and adjacency matrices are unique in terms of the underlying connected structure, we can argue that our Laplacian is unique to the combinatorial complex.
However, this uniqueness is based on the same principle as that of adjacency matrices.
Hence, we showed that each Laplacian has a unique spectrum.

\begin{wrapfigure}[14]{R}{0.45\textwidth}
        \vspace{-20pt}
        \includegraphics[width=\linewidth]{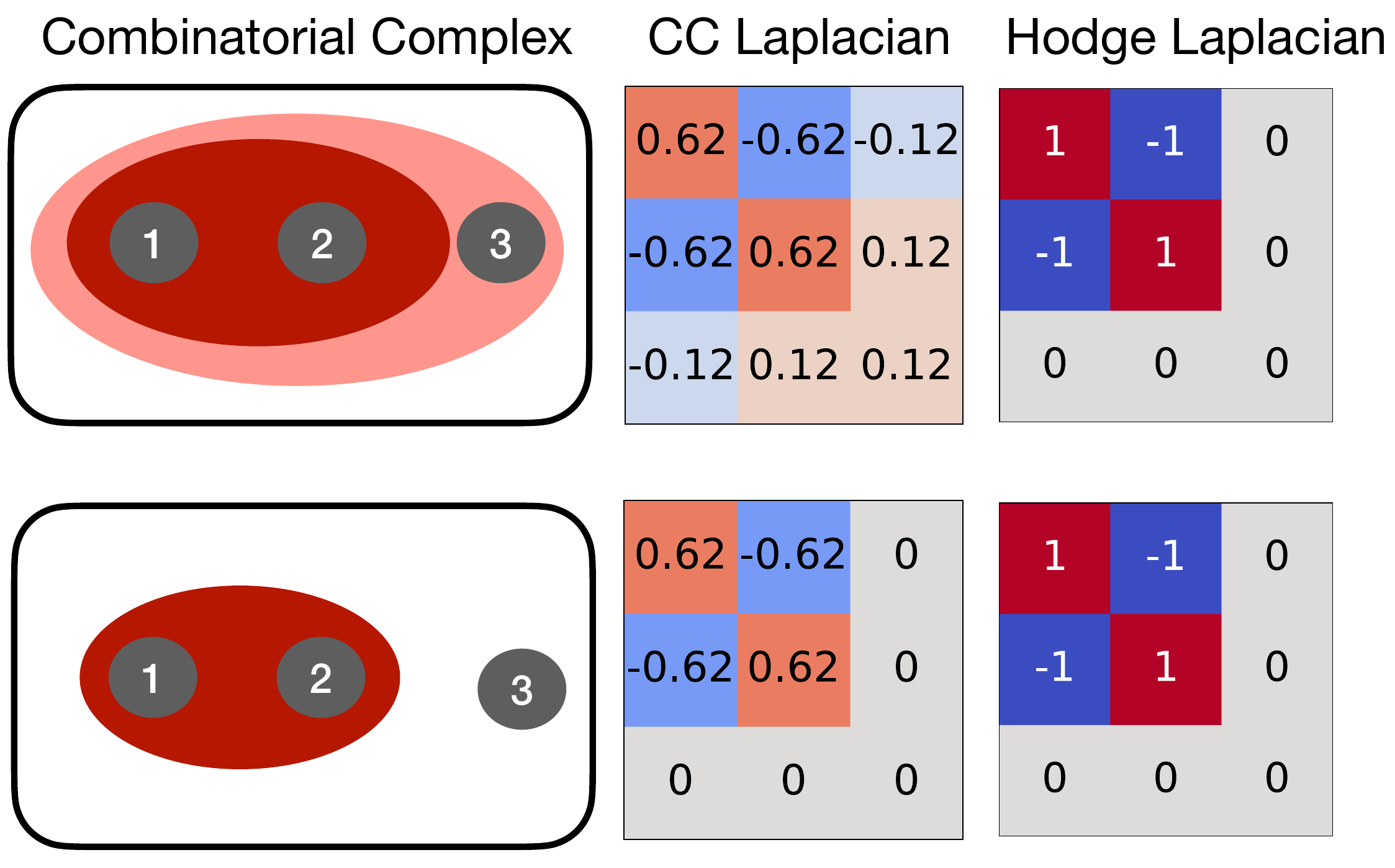}
    \caption{Presenting two Combinatorial Complexes with their CC and Hodge Laplacian. While the CC Laplacian differs, the Hodge Laplacian is the same for both complexes.}
    \label{fig:counter_example_hodge}
\end{wrapfigure}
\begin{lemma}(\textbf{Non-uniqueness of Hodge Laplacians on CC}) Hodge Laplacians on Combinatorial complexes are not unique, meaning there exists a pair of Combinatorial Complexes which share a Hodge Laplacian, but are not isomorphic
\end{lemma}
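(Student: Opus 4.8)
The statement is an existence claim, so the natural strategy is to exhibit an explicit pair of combinatorial complexes $C$ and $C'$ that are not isomorphic yet have identical Hodge Laplacians $\Delta_k$ for every rank $k$. The figure referenced as \cref{fig:counter_example_hodge} already suggests that such a pair has been found, so the proof should simply present those two complexes, compute both Hodge Laplacians, and verify the claims. The key observation to exploit is the structural mismatch highlighted in the Background section: the Hodge Laplacian $\Delta_k = \partial_{k+1}\partial_{k+1}^\top + \partial_k^\top\partial_k$ only ever sees the boundary maps between \emph{consecutive} ranks, so any higher-order relational information in a combinatorial complex that is not faithfully encoded in consecutive-rank incidences is invisible to it. In contrast, the CC Laplacian of \cref{def:cc_laplacian} sums $b_i\,\delta_i\delta_i^\top$ over \emph{all} ranks $i$ relative to rank $0$, hence can see those higher incidences.

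\textbf{Key steps.} First I would fix a small vertex set (say three or four vertices) and build two complexes that share the same rank-0 and rank-1 skeleton but differ in how their higher-rank cells attach — for instance, one complex having a single rank-2 cell spanning a set of vertices, and the other having that same vertex set covered by rank-2 cells in a different incidence pattern, arranged so that $\partial_1$ (and $\partial_1^\top\partial_1$, the graph Laplacian piece) coincide, and so that $\partial_2\partial_2^\top$ also coincide despite the different cell structure. Second, I would exhibit the boundary matrices $\partial_1, \partial_2$ (and higher, if used) for both complexes explicitly and compute $\Delta_0 = \partial_1\partial_1^\top$ for each, checking equality; if the example uses only ranks up to $2$, then $\Delta_0$ is the only relevant Hodge Laplacian acting on rank-0 cochains, and one shows it agrees. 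Third, I would argue non-isomorphism directly from the definition of CC isomorphism: any bijection $\phi$ on cells must preserve rank and incidence, and I would point to a rank-$2$ cell in $C$ whose vertex-incidence pattern (its set of rank-0 faces, or its adjacency to rank-1 cells) has no counterpart in $C'$, so no such $\phi$ exists. Finally, I would note (consistent with the figure's caption) that the CC Laplacian \emph{does} distinguish the two, which is not strictly required for this lemma but reinforces the point and connects to \cref{them:uniqueness_of_laplacians}.

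\textbf{Main obstacle.} The delicate part is engineering the example so that \emph{all} the Hodge Laplacians genuinely coincide, not just $\Delta_0$: if the complexes carry rank-$2$ cells, then $\Delta_1 = \partial_2\partial_2^\top + \partial_1^\top\partial_1$ must also match, which constrains $\partial_2$ up to an orthogonal change of basis on the rank-$2$ cochain space, and one must check the chosen attaching maps actually realize this while still producing non-isomorphic complexes. The cleanest route is probably to keep the example minimal — two complexes differing only in the presence/placement of a single higher-rank cell whose boundary is null or combinatorially trivial in the chain-complex sense (e.g., a rank-2 cell attached to a vertex set that does not form a cycle in the rank-1 skeleton, so its contribution to $\partial_2$ is forced to be zero or cancels), making $\Delta_k$ trivially unchanged while the CC structure, and hence the isomorphism type, does change. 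I would verify the $\partial_{k+1}\partial_{k+1}^\top$ terms vanish or coincide by direct inspection of the (small) incidence matrices, and conclude.
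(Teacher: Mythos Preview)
Your proposal is correct, and the ``cleanest route'' you identify at the end is exactly what the paper does. The paper's counterexample fixes three rank-$0$ cells with a single rank-$1$ cell joining two of them, and then lets the two complexes differ only in that one carries an additional rank-$4$ cell incident to the third vertex (and the rest). Because neither complex has any rank-$3$ or rank-$5$ cells, the maps $\partial_4$ and $\partial_5$ are vacuously zero, so the extra cell contributes nothing to any $\Delta_k$ and all Hodge Laplacians coincide trivially; non-isomorphism is immediate since one complex has a rank-$4$ cell and the other does not. Your initial plan of using rank-$2$ cells and engineering $\partial_2\partial_2^\top$ to match is workable but needlessly delicate --- the paper sidesteps the whole obstacle you flag by \emph{skipping ranks}, so the relevant consecutive-rank boundary pieces are empty by construction rather than equal by cancellation.
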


This corollary can be demonstrated by constructing a simple counterexample, as illustrated in \cref{fig:counter_example_hodge}. The two combinatorial complexes shown there are non-isomorphic. We observe that cells 1, 2, and 3 are of rank 0. Cells 1 and 2 are connected to a rank-1 cell in both complexes. However, in the first complex, an additional rank-4 cell connects cell 3 to the rest of the structure through a higher-order relationship. This higher-order interaction renders the two complexes non-isomorphic. The CC Laplacian can capture this difference, as it accounts for all higher-rank interactions, including the presence of the rank-4 cell. In contrast, the Hodge Laplacian fails to detect this distinction, as the rank-4 cell is not connected to any rank-3 or rank-5 cell, and thus contributes zero under the boundary and Laplacian operators.

As a result, the Hodge Laplacians of the two complexes are identical, highlighting their non-uniqueness. In contrast, the CC Laplacian defined in \cref{def:cc_laplacian} successfully distinguishes them, offering a more expressive and discriminative representation.

\begin{corollary}
\label{thm:hodge_expressiveness}
(\textbf{Hodge Laplacian Expressiveness})    On Combinatorial Complexes, the Laplacian in \cref{def:cc_laplacian} is strictly more expressive than the Hodge Laplacian, and on Simplicial Complexes they are equally expressive.
\end{corollary}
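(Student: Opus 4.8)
The plan is to read ``expressive'' as ``able to distinguish non-isomorphic complexes'' and to split the statement into two halves: (i) on arbitrary combinatorial complexes the operator $\mathbf{L}$ of \cref{def:cc_laplacian} separates at least as many pairs as the Hodge Laplacian, with the containment proper; and (ii) on the subclass of simplicial complexes the two separating powers coincide.

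For (i) I would avoid any direct spectral computation and instead invoke \cref{them:uniqueness_of_laplacians}: up to orthogonal conjugation $\mathbf{L}$ is a complete isomorphism invariant, so it distinguishes \emph{every} pair of non-isomorphic CCs. Since isomorphic complexes necessarily have permutation-equivalent Hodge Laplacians, any pair the Hodge Laplacian distinguishes must be non-isomorphic and is therefore distinguished by $\mathbf{L}$ — this gives ``at least as expressive.'' For strictness I would point to the preceding Non-uniqueness Lemma, realised by the pair in \cref{fig:counter_example_hodge}: two non-isomorphic CCs that differ only by a rank-$4$ cell incident to no rank-$3$ or rank-$5$ cell, so that all of their Hodge Laplacians agree, whereas their CC Laplacians differ because $\mathbf{L}$ additionally records the term $\delta_4\delta_4^\top$. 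This one pair certifies that the inclusion of separating sets is strict.

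For (ii) the half already proved shows $\mathbf{L}$ is at least as expressive as the Hodge Laplacian on simplicial complexes as well, so it remains to prove the reverse: that on a simplicial complex the full family $\{\Delta_k\}_k$ is \emph{itself} a complete invariant, after which ``equally expressive'' follows. I would argue by reconstruction from the Hodge data. The matrix $\Delta_0=\partial_1\partial_1^\top$ recovers the $1$-skeleton up to isomorphism; then $\partial_1^\top\partial_1$ is determined, so $\partial_2\partial_2^\top=\Delta_1-\partial_1^\top\partial_1$ is recovered, and its diagonal (triangles on each edge) together with its off-diagonal entries (signed triangles on each pair of edges) pins down the set of $2$-simplices, using that in a simplicial complex a $2$-simplex appears together with all three of its edges and is determined by its three vertices. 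Inductively, $\partial_{k+1}\partial_{k+1}^\top=\Delta_k-\partial_k^\top\partial_k$ with $\partial_k^\top\partial_k$ already determined by the previously reconstructed simplices, so the $(k+1)$-simplices are recovered as vertex sets and downward closure reassembles the complex. Hence on simplicial complexes both $\mathbf{L}$ and $\{\Delta_k\}_k$ determine the complex up to isomorphism, so neither distinguishes a pair the other cannot.

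The step I expect to be the main obstacle is exactly this inductive reconstruction for simplicial complexes: one has to verify carefully that the combinatorial content of $\partial_{k+1}\partial_{k+1}^\top$, read off after subtracting the already-known down-Laplacian $\partial_k^\top\partial_k$, genuinely identifies which vertex sets span $(k+1)$-simplices — tracking the $\pm 1$ sign convention of the incidence and boundary matrices and the fact that a relabelling of the vertices consistently induces relabellings of all higher-rank cells, so that the bookkeeping is really ``up to isomorphism.'' It is the rigidity of simplicial complexes (downward closure, faces determined by their vertices) that powers this argument, and its failure for general CCs — precisely the loophole exploited by the rank-$4$-cell example — that forces the inequality to be strict there rather than tight.
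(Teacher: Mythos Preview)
Your argument for part (i) is exactly the paper's: the non-uniqueness lemma realised by the pair in \cref{fig:counter_example_hodge} gives strictness, and \cref{them:uniqueness_of_laplacians} gives ``at least as expressive.''

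For part (ii) you take a genuinely more careful route than the paper. The paper's appendix proof simply observes that, on a simplicial (or cell) complex, the boundary operators $d_k$ are determined by the combinatorial structure, hence so is each $\Delta_k$, and declares the two Laplacians ``equally expressive.'' That argument establishes the map $\text{complex}\mapsto\{\Delta_k\}_k$ is well-defined, but not that it is injective, which is what ``equally expressive'' with a complete invariant like $\mathbf{L}$ actually demands. Your inductive reconstruction --- recover the $1$-skeleton from $\Delta_0$, peel off $\partial_{k+1}\partial_{k+1}^\top=\Delta_k-\partial_k^\top\partial_k$ once the lower skeleton is known, and read off the $(k{+}1)$-simplices from the resulting matrix --- is what is needed to close that gap, and it correctly exploits downward closure and ``faces determined by their vertex sets,'' the simplicial rigidity that fails for general CCs. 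The paper's main-text sentence (``each cell is composed exclusively of cells of one rank lower, so the Hodge Laplacian captures all valid interactions'') is the same intuition, but your reconstruction is the honest proof of it. Your flagged obstacle --- verifying that the entries of $\partial_{k+1}\partial_{k+1}^\top$ really pin down the $(k{+}1)$-simplices under a consistent vertex relabelling --- is the right place to put the work; the diagonal counts cofaces and the off-diagonal signed coface overlaps suffice because a simplicial $(k{+}1)$-simplex is the unique common coface of any two of its $k$-faces sharing $k$ vertices.
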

While the first part follows directly from the previous lemma, the second statement relies on the structural property of cell complexes. Each cell is composed exclusively of cells of one rank lower. As a result, the Hodge Laplacian captures all valid interactions between cells in this setting.

\paragraph{Laplacian interpretations} 
\begin{theorem}
(\textbf{Smoothness})
    Let $L$ be the Laplacian descriptor for the Combinatorial Complex $C$. We then define a function $f: C^0 \rightarrow \mathbb{R}$. Based on this:
\begin{align}
    f^\top L f
    \label{eq:smoothness}
\end{align}
expresses the smoothness of a function defined on rank-0 cells of the combinatorial complex. In particular, \cref{eq:smoothness} can be reformulated as 
\begin{align}
    B\sum_{i,j} w_{i,j}(f_i - f_j)^2~,
    \label{eq:smoothness_itemised}
\end{align}
\end{theorem}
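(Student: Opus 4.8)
The plan is to unwind the definition of $L$ as a weighted sum of the outer products $\delta_i \delta_i^\top$ and to recognise each such term as a (weighted) graph Laplacian-type object associated with the rank-$i$ incidence structure. First I would write
\[
f^\top L f = \sum_{i=1}^R b_i\, f^\top \delta_i \delta_i^\top f = \sum_{i=1}^R b_i\, \lVert \delta_i^\top f \rVert^2 ~,
\]
which already shows the quantity is non-negative and equals zero exactly when $f$ lies in the common kernel of all the $\delta_i^\top$; this is the first concrete sense in which $f^\top L f$ measures ``smoothness'' (constancy across all incidence-linked groups of rank-0 cells). Then, for each fixed rank $i$ and each rank-$i$ cell $y$, I would expand the $y$-th coordinate of $\delta_i^\top f$ as $\sum_{x_k \subset y} \delta_i(x_k,y) f_{x_k}$, and square it.

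Next I would combine the squared coordinates over all $y$ and all $i$, and collect the resulting cross terms $f_{x_k} f_{x_\ell}$. Using the sign convention from the incidence-matrix definition (for each $y$ exactly one incident rank-0 cell gets $+1$, the rest get $-1$), the diagonal contributions and the cross contributions organise into pairwise differences, giving an expression of the form $\sum_{i,j} w_{ij}(f_i - f_j)^2$ up to the appropriate combinatorial bookkeeping, where $w_{ij}$ aggregates (with the weights $b_i$) the number and rank of higher cells jointly incident to the rank-0 cells $i$ and $j$. Since all the $b_i$ are positive and, by the distinct-subset-sum hypothesis on $\mathcal{B}$, in particular nonzero, one may factor a common positive constant $B$ (or simply absorb the $b_i$ into the $w_{ij}$; I would state precisely which, matching \cref{def:cc_laplacian}) to land on \cref{eq:smoothness_itemised}. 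Finally I would note that the $w_{ij} \ge 0$ together with the identity make $f^\top L f$ a legitimate Dirichlet energy, so it penalises functions that vary across cells tied together by higher-order relations — exactly the smoothness interpretation claimed — and that restricting to $R=1$ recovers the classical $\sum_{(i,j)\in E}(f_i-f_j)^2$, consistent with the Graph Laplacian corollary.

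The main obstacle is the bookkeeping in passing from $\sum_i b_i \lVert \delta_i^\top f\rVert^2$ to a clean sum of squared differences: because a single higher-rank cell $y$ may be incident to more than two rank-0 cells, the square $\big(\sum_{x_k\subset y}\delta_i(x_k,y)f_{x_k}\big)^2$ is not literally a single $(f_a-f_b)^2$, so one must either (a) argue that with the stated $\pm 1$ convention each such square still decomposes into a nonnegative combination of pairwise squared differences (true since $(\sum_k \epsilon_k f_k)^2$ with one $+1$ and the rest $-1$ expands into terms that can be regrouped this way), or (b) more honestly, define $w_{ij}$ as the coefficient extracted after this regrouping and verify $w_{ij}\ge 0$. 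I would handle this by treating the contribution of each fixed cell $y$ separately, proving the elementary algebraic identity for $(\,f_{a_1} - \sum_{k\ge 2} f_{a_k}\,)^2$-type terms, and then summing; care is also needed to state the exact constant $B$ (and whether it is $1$, $\min_i b_i$, or the $b_i$ themselves sitting inside $w_{ij}$), which I would pin down to whatever makes \cref{eq:smoothness_itemised} literally correct.
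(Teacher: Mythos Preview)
Your approach diverges from the paper's: the paper does not start from $\sum_i b_i\lVert\delta_i^\top f\rVert^2$ at all, but instead asserts directly that $L$ has the entrywise form of a weighted graph Laplacian ($L_{ij}=-w_{ij}$ for $i\neq j$ and $L_{ii}=\sum_{k\neq i}w_{ik}$) and then performs the standard symmetrisation $\sum_i L_{ii}f_i^2 - \sum_{i\neq j}w_{ij}f_if_j = \sum_{i<j}w_{ij}(f_i-f_j)^2$. Your route through the incidence matrices is more faithful to \cref{def:cc_laplacian}, and you correctly isolate the real obstacle, but the resolution you propose contains a genuine gap.

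Specifically, your option (a) --- that $(\sum_k \epsilon_k f_k)^2$ with one $+1$ and the rest $-1$ decomposes into a nonnegative combination of pairwise squared differences --- is false as soon as a higher-rank cell is incident to three or more rank-$0$ cells. Take $(f_1-f_2-f_3)^2$ and evaluate at $f_1=f_2=f_3=1$: the value is $1$, whereas every $(f_i-f_j)^2$ vanishes there, so $(f_1-f_2-f_3)^2$ is not even in the \emph{span} of the pairwise squared differences, let alone their nonnegative cone. The same obstruction kills option (b): any regrouping forces negative coefficients, and indeed $L$ as built from \cref{def:cc_laplacian} need not annihilate the constant vector (the $y$-th entry of $\delta_i^\top\mathbf{1}$ is $2-m$ when the rank-$i$ cell $y$ contains $m$ rank-$0$ cells), so $f^\top L f$ cannot equal any expression of the form $\sum_{i,j}w_{ij}(f_i-f_j)^2$ in that regime. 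To recover the paper's statement you must either restrict to the setting where every higher-rank cell contains exactly two rank-$0$ cells (so that each column of $\delta_i$ is a genuine signed edge), or do what the paper does and simply \emph{posit} the graph-Laplacian entry structure for $L$, defining $w_{ij}:=-L_{ij}$ without tracing it back through the $\delta_i$.
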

where \( B = \sum_{i=0}^R \beta_i \)
The connection between \cref{eq:smoothness} and \cref{eq:smoothness_itemised} is easily shown by using the \cref{def:cc_laplacian} of the Laplacian and writing out the inner product.
We provide more details in the Appendix.

It is important to note that in \cref{eq:smoothness_itemised} $w_{ij}$ is $0$, iff there is no cell of any rank connecting the two cells of $i$ and $j$.
With the formulation of \cref{eq:smoothness_itemised}, it becomes clear that \cref{eq:smoothness} is a good measure for smoothness. 
This form highlights its role as a discrete Dirichlet energy, quantifying the extent to which $f$ varies across connected cells. Small values of $f^\top L f$ indicate that $f$ changes gradually along connections, implying a smooth signal over the Combinatorial Complex.

\paragraph{Observation} Smoothness of $L$: When interpreting $L$ as a function of the rank, namely: $L(r) := \sum_{i=0}^r b_i\delta_i \delta_i^\top$, we observe that $L$ does not change much for higher rank, as their individual contributing weights become smaller.

\subsection{Expressiveness of our approach}
\label{Sec:computability}
In this section, we establish the computational expressiveness of our proposed method. First, we show that non-isomorphic Combinatorial Complexes will have different HKS descriptors.
Then, based on this result, we show the expressiveness of our proposed method.

\paragraph{A universal function approximator} is a neural network, which structurally can learn to approximate any function \cite{hornik1989multilayer}. Further, it has been shown that Transformers can also approximate functions \cite{yun2019transformers}.

\begin{theorem} 
\textbf{(HKS uniqueness)}
    Let \( L \) and \( L' \) be two Laplacians such that \( L' \neq \mathbf{\Pi} L \mathbf{\Pi}^{\top} \) for any orthogonal matrix \( \mathbf{\Pi} \). Then the corresponding Heat Kernel Signature (HKS) descriptors derived from \( L \) and \( L' \) are distinct.
\end{theorem}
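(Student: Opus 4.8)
The plan is to prove the contrapositive in spirit: I will show that the HKS descriptors, together with the structure that generates them, recover the spectrum of $L$ up to the allowed orthogonal conjugation, so that equal HKS descriptors force $L' = \mathbf{\Pi} L \mathbf{\Pi}^{\top}$ for some orthogonal $\mathbf{\Pi}$. Recall that $K_t = \exp(-tL) = \Phi^\top \operatorname{diag}(e^{-t\lambda_1}, \dots, e^{-t\lambda_n}) \Phi$, and that the diagonal entry at a rank-0 cell $c$ is $K_t(c) = \sum_{i} e^{-t\lambda_i} \phi_i(c)^2$. The first step is to observe that the full collection of diagonal values $\{K_t(c)\}_{c \in C^0}$, viewed as functions of the continuous parameter $t \in (0,\infty)$, determines a finite list of exponential rates. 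Concretely, $\sum_c K_t(c) = \operatorname{tr}(\exp(-tL)) = \sum_i e^{-t\lambda_i}$, and since distinct real exponentials $e^{-t\lambda}$ are linearly independent as functions of $t$, the multiset of eigenvalues $\{\lambda_i\}$ is uniquely recoverable from the trace of the heat kernel as a function of $t$. Hence two Laplacians with the same HKS descriptors at all (or sufficiently many — a set with an accumulation point suffices by analyticity) time points $t_1, \dots, t_d$ share the same spectrum.

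Next I would upgrade "same spectrum" to "orthogonally conjugate." Two real symmetric matrices with the same multiset of eigenvalues are orthogonally conjugate: diagonalize $L = \mathbf{U} \Lambda \mathbf{U}^\top$ and $L' = \mathbf{U}' \Lambda \mathbf{U}'^\top$ with the same $\Lambda$ (eigenvalues sorted identically), and set $\mathbf{\Pi} = \mathbf{U}' \mathbf{U}^\top$, which is orthogonal and satisfies $L' = \mathbf{\Pi} L \mathbf{\Pi}^\top$. This is exactly the exclusion in the hypothesis, so if $L' \neq \mathbf{\Pi} L \mathbf{\Pi}^\top$ for every orthogonal $\mathbf{\Pi}$, the spectra must differ, and therefore by the linear-independence argument the heat-kernel traces differ as functions of $t$, which forces at least one coordinate $K_{t_j}(\cdot)$ summed over cells — hence the HKS descriptors themselves — to differ. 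This gives the stated conclusion.

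The main obstacle, and the place where I would be most careful, is the gap between "the traces $\operatorname{tr}(K_t)$ differ" and "the \emph{per-cell} descriptors $\mathrm{HKS}_{t_1,\dots,t_d}(c)$, as defined in \cref{def:hks_cc}, differ." The descriptor assigns to each cell a $d$-vector, so strictly the claim should be that the \emph{collection} of descriptors over all rank-0 cells differs (equivalently, their sum, which is the trace vector $(\operatorname{tr} K_{t_1}, \dots, \operatorname{tr} K_{t_d})$, differs). I would phrase the theorem's conclusion at that level and note that if the finite time points $t_1, \dots, t_d$ are generic (or simply if $d$ is at least the number of distinct eigenvalues and the $t_j$ are distinct, making the relevant Vandermonde-type system in the variables $e^{-t_j \lambda}$ full rank), then a spectral difference is already witnessed at the chosen finite set of times rather than requiring the whole curve. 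A second, minor subtlety is that $\mathbf{\Pi}$ is asserted to be a \emph{permutation-like} equivalence in \cref{them:uniqueness_of_laplacians}; here I only need it orthogonal, so I would simply invoke the weaker "orthogonally conjugate $\Rightarrow$ isocspectral" direction and cite \cref{them:uniqueness_of_laplacians} for the bridge back to CC isomorphism if that is wanted downstream.
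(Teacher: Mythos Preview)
Your proof is correct and shares the paper's core mechanism --- eigendecomposition of $L$ together with injectivity of $\lambda \mapsto e^{-t\lambda}$ --- but you organise the argument more carefully than the paper does. The paper's proof works at the level of the \emph{full} heat kernel $K_t$: it argues that $L$ determines $K_t$ uniquely and that two distinct spectra cannot produce the same $K_t$, stopping there. It never explicitly bridges from the full kernel to the HKS descriptors (which are only the diagonal entries at finitely many times), nor does it discuss whether a finite set of time points suffices. Your route through the trace, $\operatorname{tr}(K_t) = \sum_c K_t(c) = \sum_i e^{-t\lambda_i}$, followed by linear independence of real exponentials to recover the eigenvalue multiset, and then the standard fact that isospectral symmetric matrices are orthogonally conjugate, closes exactly those gaps; your remarks on generic or sufficiently many $t_j$ and on per-cell versus aggregate descriptors make the conclusion apply directly to the HKS as defined in \cref{def:hks_cc}. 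In short, the paper's version is terser and treats ``heat kernel'' and ``HKS'' as interchangeable, while your version buys a rigorous contrapositive that actually lands on the descriptor.
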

The proof follows two steps.
\begin{enumerate}
    \item \textbf{Uniqueness of the Laplacian:} We first show that if two combinatorial complexes \( C \) and \( C' \) are not isomorphic—i.e., there exists no bijective map between their cells that preserves the incidence structure—then their Laplacians \( L \) and \( L' \) are not similar. That is, there exists no invertible matrix \( \mathbf{\Pi} \) such that \( L' = \mathbf{\Pi} L \mathbf{\Pi}^\top \).
    
    \item \textbf{Diffusion Distinguishability:} Given that the Laplacians are not similar, we then show that their corresponding diffusion patterns (e.g., heat kernels or heat kernel signatures) must differ. This implies that the descriptors derived from diffusion processes can effectively distinguish between non-isomorphic complexes.
\end{enumerate}

Having now proven that different CC have different HKS descriptors, we can now show that our learning approach can distinguish any non-isomorphic combinatorial complexes.

\begin{corollary} (\textbf{Expressiveness})
Given two combinatorial complexes with distinct input descriptors, it is possible to learn a function using a Universal Function Approximator (UFA) approach that effectively distinguishes between them. This means we can determine any WL classes by theoretical design. \looseness=-1
\end{corollary}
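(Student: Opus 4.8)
The plan is to chain together the two ingredients already established: the theorem on \textbf{HKS uniqueness}, which guarantees that non-isomorphic combinatorial complexes $C$ and $C'$ (equivalently, Laplacians $L$ and $L'$ that are not orthogonally similar) produce \emph{distinct} HKS descriptors; and the fact, recalled in the preceding paragraph, that both MLPs and transformers are universal function approximators. First I would make the statement precise: fix a finite family $\mathcal{F}$ of pairwise non-isomorphic combinatorial complexes (or an arbitrary pair, for the two-complex version in the statement), and note that the map $C \mapsto \mathrm{HKS}_{t_1,\dots,t_d}(C)$, read off from the multiset of rank-0 descriptors $\{\mathrm{HKS}_{t_1,\dots,t_d}(c) : c \in C^0\}$ pooled in a permutation-invariant way, is injective on $\mathcal{F}$ by the HKS uniqueness theorem. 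Hence the inputs fed to the network are genuinely distinct feature vectors, and distinctness is preserved under the $\sin/\cos$ positional encoding $\mathbf{O} = [\sin(\mathbf{X}\cdot\mathbf{G})\;\;\cos(\mathbf{X}\cdot\mathbf{G})]$ provided $\mathbf{G}$ is chosen generically (the encoding is a fixed injective-on-a-compact-set embedding, so it cannot collapse two previously distinct inputs).

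Next I would invoke universality. Since the HKS-based input map separates the complexes in $\mathcal{F}$, there exists a well-defined target function $g$ on the (finite) image set that assigns to each distinct descriptor the desired label — for instance, the WL color class, the isomorphism class, or any other classification one wishes to realize. A universal function approximator composed with the (injective, hence information-preserving) input pipeline can approximate $g$ to arbitrary accuracy on this finite set; on a finite domain, arbitrarily small sup-error plus a threshold/rounding step yields exact classification. Concretely, I would argue: (i) the self-attention / MLP-Mixer backbone followed by the final MLP is a universal approximator of permutation-invariant functions on sets of bounded size (cite \cite{yun2019transformers, hornik1989multilayer}); (ii) the composition with the fixed equivariant HKS-plus-positional-encoding front end is therefore a universal approximator of permutation-invariant functions that factor through the HKS descriptor; (iii) because the HKS descriptor already distinguishes all non-isomorphic CCs, \emph{every} isomorphism-invariant target — in particular the WL hierarchy's color refinement classes at any depth — factors through it, so it lies in the approximable class. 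This gives the ``we can determine any WL classes by theoretical design'' conclusion.

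I expect the main obstacle to be handling permutation-invariance and the set-valued pooling cleanly: the HKS uniqueness theorem as stated concerns Laplacians up to orthogonal similarity, and I would need to be careful that the descriptor we actually feed the network (a multiset over rank-0 cells, subsequently order-dependent once stacked into $\mathbf{X}$) retains the separating property after the model's own pooling. The honest fix is to note that any permutation-invariant readout — e.g. a sum- or DeepSets-style aggregation, which the attention/Mixer architecture can realize — applied to a separating multiset of descriptors remains separating on a finite family, possibly after perturbing $\mathbf{G}$ to avoid accidental coincidences; alternatively one restricts to the claim for a fixed pair of complexes as literally stated, where one simply needs a single separating coordinate. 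A secondary, more cosmetic gap is the passage from ``$\varepsilon$-approximation'' to ``exact distinguishing,'' which is routine on a finite input set but should be stated. Modulo these bookkeeping points, the corollary is an immediate composition of the two cited facts, so the proof is short.
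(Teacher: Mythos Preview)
Your proposal is correct and follows essentially the same line as the paper's own proof: assume the HKS-based inputs separate the two complexes, then invoke universal approximation to produce a network that outputs distinct values on them. You are considerably more careful than the paper about permutation-invariant pooling, the positional encoding, and the passage from $\varepsilon$-approximation to exact separation on a finite set, but these are refinements of the same argument rather than a different route.
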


We argue that if the neural network's input is distinctive, we can learn a function with a unique output with the UFA. This makes our method able to distinguish between CC up to isomorphism.

\section{Experiments}
\label{sec:experiment}
This section demonstrates that we can outperform the SMCN method on combinatorial complexes, which differ in cells of rank at least 3. Furthermore, we can distinguish between any combinatorial complexes as SMCN can, which are listed in the torus dataset.
Afterwards, we will also test our method on established benchmarks.
During the evaluation, we will demonstrate that 1) our method is more expressive than the SMCN and 2) is on par with other baselines for the real-world datasets.
All models were trained on a single NVIDIA V100 GPU with 32 GB of memory. Training took approximately 1 to 5 hours, depending on the dataset. This project consumed a total of 6,000 GPU hours. In our implementation, we typically use $d=10$ and an equal spacing between $0$ and $3$ for the times.

\subsection{Tori and higher order combinatorial complexes}
\begin{wraptable}{r}{0.5\linewidth}
\vspace{-10pt}
\centering
\begin{adjustbox}{width=\linewidth}
\begin{tabular}{lccc}
\hline Model &  \# & Accuracy & Speed \\
\hline SMCN  \cite{eitan2024topological}& 223 & $100 \%$ & $7$ it/s\\
CIN \cite{hajij2020cell} & 0 & $0 \%$ & $10$ it/s \\
TopoHKS (ours) & 223 & $100\%$ & $95$ it/s\\
\hline
\end{tabular}
\end{adjustbox}
\caption{Topological Blind Spot torus dataset}
\label{tab:topBlindTorus}
\vspace{-20pt}
\end{wraptable}

We start by comparing our method to the SMCN and HOMP for the torus dataset. This dataset is constructed analogously as mentioned in \cite{eitan2024topological}.
\cref{tab:topBlindTorus} compares our method with the SMCN and CIN.
When testing the method, we notice that, as shown in the proof section, our process is at least as expressive as the SMCN method and also more expressive than CIN.

We created a new dataset to prove that we also outperform SMCN in terms of expressivity. 
This dataset consists of a pair of tori, similar to the original topological blind spot dataset.
The only difference is that the tori are different by one cell of rank $4$, which covers two cells of rank $2$.

\begin{figure}%
    \centering
    \vspace{-10pt}
    \begin{subfigure}[b]{0.6\linewidth}
    \begin{adjustbox}{width=\textwidth}
        \begin{tabular}{lccc}
\hline Model & Distinguished Pairs & Accuracy & Speed \\
\hline SMCN \cite{eitan2024topological}& 0 & $0 \%$ & $10$ it/s\\
CIN \cite{hajij2020cell}& 0 & $0 \%$ & $9$ it/s \\
TopoHKS (ours) & 223 & $100\%$ & $100$ it/s\\
\hline
\end{tabular}
\end{adjustbox}
\caption{\small Table with accuracies and speed on modified torus dataset}
\label{tab:table_modified_torus}
    \end{subfigure}
    \hfill
    \begin{subfigure}[b]{0.19\linewidth}
        \includegraphics[width=\linewidth]{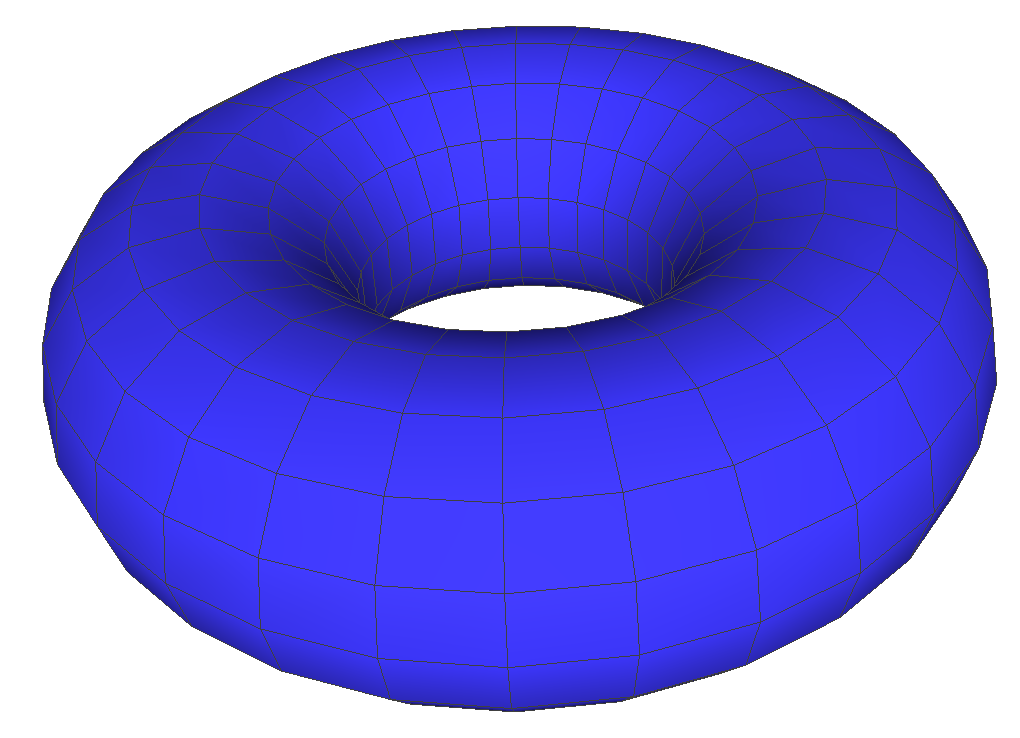}
        \caption{Torus with only cells of rank 0,1,2}
        \label{fig:torus_cell_1_2}
    \end{subfigure}
    \hfill
    \begin{subfigure}[b]{0.19\linewidth}
        \includegraphics[width=\linewidth]{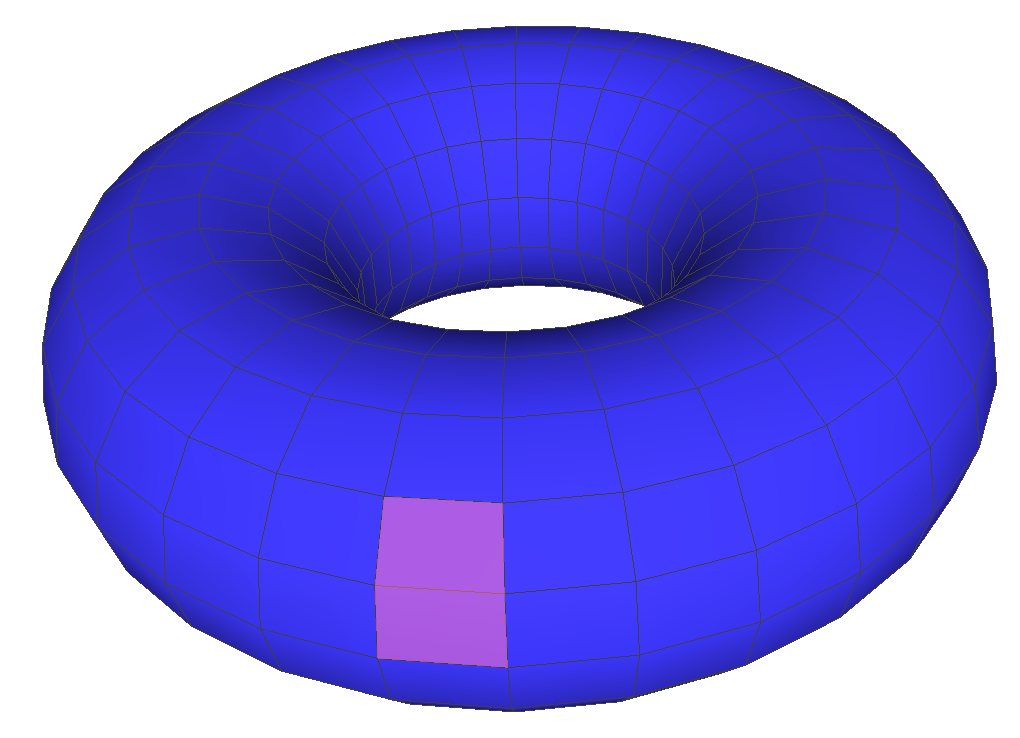}
        \caption{Torus with cells of rank 0,1,2,3}
        \label{fig:torus_cell_1_2_3}
    \end{subfigure}
    \caption{In \cref{tab:table_modified_torus} we show that only our method can differentiate between tori with a cell of rank 3 differing. In \cref{fig:torus_cell_1_2} and \cref{fig:torus_cell_1_2_3} visualise an example datapoint, which can be distinguished by our method, but not by SMCN}
    \label{fig:higherOrderDifferent}
\end{figure}

In \cref{fig:higherOrderDifferent} we observe what we also assumed from the proof section.
Our proposed method can distinguish between combinatorial complexes, which differ in cells of higher rank.
This example also demonstrates that our method can distinguish pairs of combinatorial complexes down to isomorphism. An important note is that this only works with the CC Laplacian and not with the Hodge Laplacian.

\subsection{Scalability comparison}

\begin{wrapfigure}[11]{r}{0.3\textwidth}
\vspace{-10pt}
    \includegraphics[width=\linewidth]{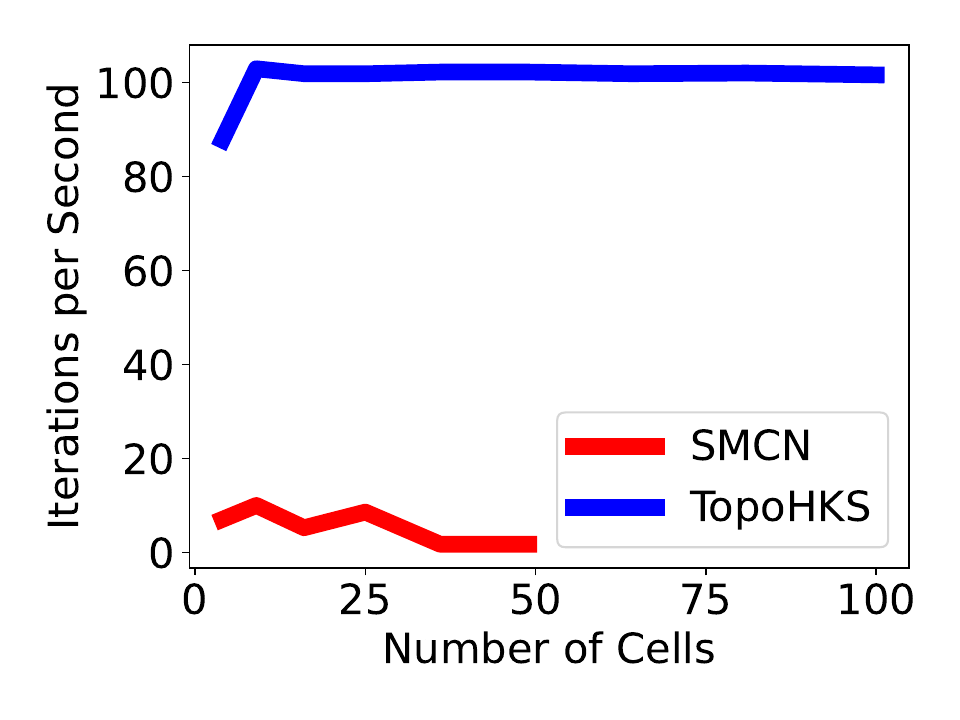}
    \caption{Inference timing for differently sized combinatorial complexes. Mean over five runs %
    }
    \label{fig:inference_timing}
    \vspace{-40pt}
\end{wrapfigure}

We evaluate the scalability of our method against the Scalable-MCN (SMCN) model from \cite{eitan2024topological}, using a modified Torus dataset. Instead of focusing on classification accuracy, we concentrate on computational performance as the number of cells increases from 4 to 100. As shown in \cref{fig:inference_timing}, SMCN fails to scale beyond 50 cells due to hardware limitations (GPU V100, 32GB). In contrast, our method maintains a constant computational footprint regardless of complex size and benefits from an efficient transformer backbone, achieving higher GPU throughput. On average, it is approximately 12 times faster than SMCN. \looseness=-1

\subsection{Graph Classification - Benchmarks}

\begin{table}[t]
  \centering
  \adjustbox{max width=\textwidth}{
\begin{tabular}{lcc|cc || cc | cc}
\hline Model & \multicolumn{2}{c}{MolHIV}  & \multicolumn{2}{c}{PROTEIN} &\multicolumn{2}{c}{Glycose} & \multicolumn{2}{c}{Immuginicity}\\
 &  ACC & speed & ACC & speed & MCC & speed & MCC & speed \\
\hline
GCN \cite{kipf2016semi} & $76.06 \pm 0.97$ & $0.04$s & $75.53\pm1.62$ & $0.003$s & NaN & NaN & $0.78 \pm 0.02$ & $20.00$ it/s \\
GIN \cite{xu2018powerful} & $75.58 \pm1.40$ & $0.007$s & $75.54\pm1.85$ & $0.007$s & $ 0.89 \pm 0.02$ & $42$ it/s & $0.80 \pm 0.02$ &$44.14$ it/s \\
 
\hline\hline SMCN \cite{eitan2024topological} & $81.16 \pm 0.90$ & $3.1$s &$72.8 \pm 1.5$& $2.8$s& $0.90 \pm 0.07$ & $3.2$ it/s & $0.85 \pm 0.01$ & $3.3$ it/s \\
CIN \cite{hajij2020cell} & $80.94 \pm 0.57$ & $3.0$s &$77 \pm 4.2$ &$2.9$s & NaN & NaN & NaN & NaN \\
TopNetX \cite{verma2024topological} & $75.98 \pm1.80$ &$8.4$s & $73.79 \pm 1.45$& $5.3$s &$0.71 \pm 0.02$&$0.6$ it/s&$0.62 \pm 0.04$ & $0.5$ it/s\\
TopoHKS (ours) & $\mathbf{83.1 \pm  1.3} $ & $\mathbf{0.65}$s& $\mathbf{79.2 \pm 1.0}$& $\mathbf{0.3}$s & $\mathbf{0.90 \pm 0.03}$ & $\mathbf{ 4.1}$it/s & $\mathbf{0.84 \pm 0.01}$ & $\mathbf{4.0}$it/s\\
\hline
\end{tabular}
}
\caption{Performance on Graph Classification (MolHIV and PROTEIN) and simplicial complex datasets (Glycose, Immogenicity) CIN failed to learn on Glycose and Immogenicity}
\label{tab:nat_datasets}
\vspace{-10pt}
\end{table}  
 
Next, we compare our method on established graph classification benchmarks such as MolHIV and PROTEIN. The train/val/test setup is equivalent to \cite{verma2024topological}. As shown in \cref{tab:nat_datasets}, our method outperforms the state-of-the-art topological methods and graph neural networks.
While we are not faster than the lightweight GNNs, our inference time is only a fraction of that of the established topological methods.
This further demonstrates that with descriptive features, transformers are suitable network architectures for topological learning.

\subsection{GIFFLAR with MLP Mixer}

The Gifflar dataset \cite{joeres2024higher} includes naturally higher-order connections and consists of classification tasks: \emph{Glycose} (3 classes) and \emph{Immunogenicity} (2 classes), evaluated using MCC. We train all models using the train/val/test split from \cite{joeres2024higher}, with results shown in \cref{tab:nat_datasets}.
Our method matches the performance of SMCN while achieving significantly faster inference. It also outperforms both graph neural networks and TopNetX. Notably, CIN fails to learn from these datasets, highlighting the need for expressive yet efficient higher-order neural networks.

\section{Related Work
}

\paragraph{Node Embedding}

\textit{Spectral embeddings} are based on the graph Laplacian, defined as $L = D - A$, where $D$ is the degree matrix and $A$ the adjacency matrix \cite{streicher2023graph}. These methods leverage the Laplacian’s eigenstructure to capture structural properties of graphs. Chung's foundational work \cite{chung1997spectral} established the theoretical basis for spectral analysis, leading to techniques such as spectral clustering \cite{ng2001spectral} and Laplacian Eigenmaps \cite{belkin2003laplacian}, which embed nodes into low-dimensional spaces while preserving local neighbourhoods. Diffusion Maps \cite{coifman2006diffusion} extended this idea to capture multi-scale connectivity. More recently, spectral methods have been applied in graph neural networks to learn node features through message passing \cite{kipf2016semi, fu2022p, runwal2022robust}. \looseness=-1

\paragraph{Topological Deep Learning}
Topological Deep Learning (TDL) enables learning from data on topological structures, with research primarily focused on hypergraphs, simplicial, cellular, and combinatorial complexes. Higher-order message passing (HOMP), developed initially for simplicial complexes \cite{bodnar2021weisfeiler2, battiloro2024n, roddenberry2021principled, yang2023convolutional, ebli2020simplicial, goh2022simplicial, battiloro2024generalized, maggs2023simplicial, lecha2025higher}, has since been extended to cellular \cite{bodnar2021weisfeiler, hajij2020cell} and combinatorial complexes \cite{hajij2022higher, hajij2022topological}, achieving strong empirical performance and enhancing the expressiveness of MPNNs. In 
\cite{papillon2023architectures}, the message passing approaches have been summarised. Complementary approaches integrate pre-computed topological features into MPNNs \cite{horn2021topological, chen2021topological, ballester2023expressivity} and HOMP models \cite{verma2024topological, buffelli2024cliqueph}, further boosting performance and underscoring the value of topological priors. Recent efforts include a standardisation of topological approaches by unifying the benchmarks \cite{telyatnikov2024topobenchmarkx} and the model architecture \cite{papillon2024topotune}.

\paragraph{Expressivity}

The expressivity of GNNs is often assessed through their separation power or ability to distinguish non-isomorphic graphs. Seminal works  \cite{morris2019weisfeiler, xu2018powerful} established that MPNNS are as expressive as the 1-WL test \cite{weisfeiler1968reduction}. General \emph{isomorphism} would be in an infinite WL class. This limitation led to the development of more expressive GNNS, surpassing 1-WL but often at higher computational costs. Notably, \cite{morris2019weisfeiler, maron2018invariant} introduced architectures matching the k-WL test with $\mathcal{O}(n^k)$ complexity. Other approaches enhance expressivity through random features \cite{abboud2020surprising}, substructure counts \cite{bouritsas2022improving}, equivariant polynomials \cite{maron2019provably, puny2023expressive}, and subgraph processing 
\cite{bevilacqua2021equivariant, frasca2022understanding, zhang2023complete,zhang2021nested,cotta2021reconstruction}. Expressive topological models have been built in \cite{maron2019provably, zhang2023complete,bar2024flexible, bamberger2022topological}.
However, the authors in \cite{eitan2024topological} showed that simple HOMP can't distinguish between combinatorial complexes that have the same cover. They introduced \emph{SMCN} \cite{eitan2024topological} to address these expressive issues.
However, SMCN can only distinguish CC up to rank $3$.
For an extensive review of expressive GNN architectures, see \cite{jegelka2022theory, morris2023weisfeiler,zhang2024expressive}.

\paragraph{Heat Kernel Signatures} (HKS) \cite{sun2009concise} are diffusion-based descriptors originally developed for shape analysis, capturing intrinsic geometry through the eigenvalues and eigenfunctions of the Laplace-Beltrami operator. Subsequent works refined this idea \cite{bronstein2010scale, raviv2010volumetric, ovsjanikov2010one, liu2022novel}, while related descriptors, such as the Wave Kernel Signature (WKS) \cite{aubry2011wave}, emphasise different spectral and temporal properties. HKS has also been applied to graphs \cite{donnat2018learning}, but its potential for improving GNN expressivity remains unexplored. \looseness=-1

\section{Conclusion}

We introduced a framework that integrates the Heat Kernel Signature (HKS) with combinatorial complexes to build expressive, \textbf{permutation-invariant} representations for deep learning. We computed multi-scale heat descriptors as robust alternatives to traditional embeddings by defining a Laplacian on combinatorial complexes. Our method proved both theoretically expressive, in that it distinguishes non-isomorphic complexes, and empirically strong, achieving state-of-the-art results on molecular and topological benchmarks. It also scales efficiently, outperforming existing methods in runtime. These results highlight the value of topological descriptors in enhancing graph and complex-based learning. \looseness=-1

\paragraph{Social Impact}
The social impact of our proposed method is expected to be predominantly positive. Our method can improve drug design pipelines, brain modelling, and modelling complex quantum systems. Those applications will be beneficial to society, and we don't foresee any direct negative impact.

\paragraph{Future Work}
Several directions remain open. Learning heat kernel parameters could improve adaptability across datasets. Extending our method to dynamic combinatorial complexes may enable the study of evolving structures. Finally, combining our approach with contrastive or self-supervised learning could enhance robustness in low-data settings. We expect these steps to strengthen the role of topological deep learning in structured data representation.

\paragraph{Limitations} 
While the neural network training is efficient and fast per iteration, our method requires an expensive preprocessing step to determine a full eigendecomposition of the Laplacian of each combinatorial complex. This still hinders our method from scaling up to combinatorial complexes to a million cells of rank 0.
Furthermore, our method depends on properly selecting diffusion times and network sizes. Parameterising the features could also improve this.

{
    \small
    \bibliographystyle{unsrt}
    \bibliography{main}
}
\newpage

\section{Appendix}
In this part of the Appendix, we fully describe the proofs and provide further definitions if needed. We also include the full text for completeness and ease of reading. 

\subsection{Further definitions}

\begin{definition}
Let \( G = (V, E) \) be an undirected graph with \( n = |V| \) nodes and \( m = |E| \) edges. The \emph{incidence matrix} \( I \in \mathbb{R}^{n \times m} \) of \( G \) is defined as follows:

For each edge \( e_k = (i, j) \in E \), assign an arbitrary orientation (e.g., from node \( i \) to node \( j \)). Then the \( k \)-th column of \( I \) is given by:
\[
I_{v,k} =
\begin{cases}
+1 & \text{if } v = i \text{ (source of edge)} \\
-1 & \text{if } v = j \text{ (target of edge)} \\
0 & \text{otherwise}
\end{cases}
\]

\end{definition}

\paragraph{Simplicial Complex}
A \textbf{simplicial complex} \( K \) is a collection of subsets (called \emph{simplices}) formed from a finite set \( V \) of vertices. Each vertex \( v \in V \) appears in \( K \) as a singleton set \( \{v\} \), and any higher-dimensional simplex \( \sigma = \{v_0, \dots, v_k\} \subset V \) represents a \textbf{\( k \)-simplex}, where the dimension is \( k = |\sigma| - 1 \). Examples include:
\begin{itemize}
    \item \textbf{0-simplices}: vertices
    \item \textbf{1-simplices}: edges
    \item \textbf{2-simplices}: triangles
    \item \textbf{3-simplices}: tetrahedra
\end{itemize}

A key property of a simplicial complex is that \emph{every subset} \( \tau \subset \sigma \) of a simplex \( \sigma \in K \) must also be included in \( K \). The \textbf{dimension} of the complex is the highest dimension among its simplices. Also, each simplex only contains simplices of one lower dimension. For example:
\begin{itemize}
    \item A graph is a 1-dimensional simplicial complex.
    \item A triangle mesh is a 2-dimensional complex.
\end{itemize}

\paragraph{Boundary Operator}

A \textbf{boundary operator} \( d_k \) is a linear map that captures how \( k \)-simplices in a simplicial complex are bounded by \((k-1)\)-simplices. It generalises the concept of incidence matrices from graphs to higher dimensions. Specifically,
\[
d_k : \mathbb{R}[K_k] \rightarrow \mathbb{R}[K_{k-1}]
\]

maps each \( k \)-simplex to a formal sum of its \((k - 1)\)-dimensional faces.

To define \( d_k \), the vertices in \( V \) are ordered, and a \( k \)-simplex is expressed as an ordered list \( \sigma = [v_0, \dots, v_k] \). Then:
\[
d_k(\sigma) = \sum_{i=0}^k (-1)^i \sigma_{-i}
\]
where \( \sigma_{-i} \) denotes the \((k - 1)\)-simplex obtained by removing the \( i \)-th vertex from \( \sigma \). Based on the vertex ordering, the signs encode \textbf{orientation}.

The boundary operator reflects how each simplex connects to its lower-dimensional components and is a core concept in \emph{algebraic topology} and \emph{discrete differential geometry}. Those definitions align with the definition from \cite{keros2023spectral}.

\subsection{Laplacian Properties}

\begin{corollary} \textbf{Relationship between CC and Graph Laplacians}
For graphs, combinatorial complexes with rank 1, the Laplacian from Def. 3.1 is identical to the Laplacian defined on graphs.
\end{corollary}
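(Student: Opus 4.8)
The plan is to simply unfold the definition of the CC Laplacian in the special case where the complex has only cells of rank $0$ and rank $1$, and match it termwise against the standard (unnormalized) graph Laplacian $L_G = D - A$. First I would note that for a graph viewed as a CC, the maximum rank is $R = 1$, so the set $\mathcal{B}$ has size $1$, say $\mathcal{B} = \{b_1\}$, and \cref{def:cc_laplacian} collapses to the single term $\mathbf{L} = b_1\,\boldsymbol{\delta}_1 \boldsymbol{\delta}_1^\top$, where $\boldsymbol{\delta}_1 \in \mathbb{R}^{|\mathcal{X}_0| \times |\mathcal{X}_1|}$ is exactly the node--edge incidence matrix. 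The distinctness-of-subset-sums condition is vacuous for a one-element set, so any nonzero scalar $b_1$ is admissible; in particular the canonical choice $\{2^{-1}\}$ gives $b_1 = 1/2$, and more generally $b_1 = 1$ recovers the textbook normalization. I would state the claim as "identical up to this global scalar $b_1$," or fix $b_1 = 1$ for the statement.

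The core computation is the classical identity $\boldsymbol{\delta}_1 \boldsymbol{\delta}_1^\top = D - A$. Here I would invoke the incidence-matrix definition given in the Appendix: each rank-$1$ cell (edge) $e = \{i,j\}$ is a column with a single $+1$ and a single $-1$ (this is precisely the "$x_i$ with $\delta_k(x_i,y)=1$, all other $\delta_k(x_j,y)=-1$" rule from the higher-order incidence matrix definition, which for a $2$-element edge means one endpoint gets $+1$ and the other $-1$). Then the $(i,i)$ entry of $\boldsymbol{\delta}_1 \boldsymbol{\delta}_1^\top$ is $\sum_e \delta_1(i,e)^2 = \deg(i)$, and for $i \neq j$ the $(i,j)$ entry is $\sum_e \delta_1(i,e)\,\delta_1(j,e)$, which receives a contribution of $(+1)(-1) = -1$ for each edge joining $i$ and $j$ and $0$ otherwise, yielding $-A_{ij}$ (assuming a simple graph; for multigraphs one gets the multiplicity). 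Hence $\boldsymbol{\delta}_1 \boldsymbol{\delta}_1^\top = D - A = L_G$, and $\mathbf{L} = b_1 L_G$.

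I would also remark, for completeness, that the orientation ambiguity in $\boldsymbol{\delta}_1$ (which endpoint is labelled $+1$) does not affect the product: flipping the sign of a column leaves $\boldsymbol{\delta}_1 \boldsymbol{\delta}_1^\top$ unchanged, so the resulting Laplacian is well-defined independently of the chosen orientations, consistent with \cref{them:uniqueness_of_laplacians}. The main (and only mild) obstacle is bookkeeping rather than mathematical: one must be careful about the scalar $b_1$ — the statement is literally true only after fixing a normalization of $\mathcal{B}$ — and about whether "graph" is meant to allow multi-edges or weighted edges, in which case the identity reads $\boldsymbol{\delta}_1 W \boldsymbol{\delta}_1^\top = D_W - W$ with $W$ the edge-weight diagonal. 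Absent such subtleties the proof is a one-line matrix identity, so I would keep the exposition short and defer to the Appendix's incidence-matrix definition for the entrywise check.
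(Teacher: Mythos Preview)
Your proposal is correct and follows essentially the same route as the paper: identify $\boldsymbol{\delta}_1$ with the graph incidence matrix and invoke the classical identity $I_G I_G^\top = D - A = L_G$ (the paper cites \cite{merris1994laplacian} rather than doing the entrywise check you sketch). You are in fact more careful than the paper on one point: the paper's own proof silently drops the weight $b_1$ and writes $L_C = \delta\delta^\top$, whereas you correctly observe that Definition~3.1 literally gives $\mathbf{L} = b_1\,\boldsymbol{\delta}_1\boldsymbol{\delta}_1^\top$, so the identity holds up to this global scalar (or with the normalization $b_1 = 1$).
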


\begin{proof}
We show that the combinatorial complex Laplacian \( L_C \), when restricted to rank-0 cells and using only rank-1 adjacency, coincides with the standard graph Laplacian \( L_G \).

Let \( G = (V, E) \) be an undirected graph. Its incidence matrix \( I_G \in \mathbb{R}^{|V| \times |E|} \) assigns \( +1 \) and \( -1 \) to the source and target nodes of each edge, respectively, under an arbitrary orientation. It is well known (see \cite{merris1994laplacian}) that the graph Laplacian satisfies:
\[
L_G = I_G I_G^\top.
\]

Now consider a combinatorial complex \( \mathcal{C} \) consisting only of rank-0 and rank-1 cells, where rank-0 cells correspond to graph vertices and rank-1 cells to edges. The boundary operator \( \delta_0 \) from rank-1 to rank-0 is then equivalent to \( I_G \), up to sign convention.

Let \( L_C = \delta_0 \delta_0^\top \) denote the Laplacian on rank-0 cells of \( \mathcal{C} \). Then:
\[
L_C = \delta_0 \delta_0^\top = I_G I_G^\top = L_G.
\]

Thus, the combinatorial complex Laplacian reduces to the standard graph Laplacian in the rank-0/1 case.
\end{proof}

\begin{theorem}\textbf{Uniqueness of Laplacians for CCs}
\label{them:uniqueness_of_laplacians}
Let \( L \) be the Laplacian of a combinatorial complex \( C \). The Laplacian of a combinatorial complex is uniquely determined. Moreover, if there exists an invertible orthogonal matrix \( \mathbf{\Pi} \in \mathbb{R}^{n \times n} \) such that  
\begin{align}
      L' = \mathbf{\Pi} L \mathbf{\Pi}^{\top},
\end{align}
  
then \( L' \) serves as the Laplacian of another combinatorial complex \( C' \), which is spectrally equivalent to \( C \), meaning there exists a bijective, unique mapping between the two combinatorial complexes, which makes them isomorphic.
\end{theorem}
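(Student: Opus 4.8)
The plan is to prove both halves by exploiting the layered structure $\mathbf{L} = \sum_{i=1}^{R} b_i\,\mathbf{\delta}_i \mathbf{\delta}_i^{\top}$ of \cref{def:cc_laplacian}, treating each summand $\mathbf{M}_i := \mathbf{\delta}_i \mathbf{\delta}_i^{\top}$ as a ``rank-$i$ generalized (Laplacian) block'' on the rank-$0$ cells. First I would observe that each $\mathbf{M}_i$ is itself well defined given $C$: although $\mathbf{\delta}_i$ is only specified up to column signs (the choice of the distinguished $+1$ face) and column order (labeling of the rank-$i$ cells), the product $\mathbf{\delta}_i \mathbf{\delta}_i^{\top}$ is invariant under both, so it depends only on the incidence pattern between rank-$0$ and rank-$i$ cells, modulo a simultaneous relabeling (conjugation by a permutation matrix) of the rank-$0$ cells. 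Hence $\mathbf{L}$ is a fixed, $\mathcal{B}$-weighted superposition of these canonical blocks.

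Next comes the \emph{recoverability} step. The defining property of $\mathcal{B}$ — all subset sums distinct — is exactly what lets one read the blocks back out of $\mathbf{L}$: entrywise $\mathbf{L}_{xy} = \sum_i b_i (\mathbf{M}_i)_{xy}$, and since the multiplicities $(\mathbf{M}_i)_{xy}$ are bounded integers (by the number of cells) while the weights $b_i$ can be taken sufficiently fast-decaying (the geometric family $\{2^{-1},\dots,2^{-R}\}$ suffices when the co-incidence multiplicities are $0/1$, and a super-increasing rescaling handles the general bounded case), the decomposition $\{\mathbf{M}_i\}$ is uniquely determined by $\mathbf{L}$. It then remains to argue that the blocks determine $C$ up to isomorphism: for rank $1$ this is the preceding corollary on graphs ($\mathbf{M}_1 = \mathbf{D}-\mathbf{A}$ fixes the rank-$1$ incidence structure), and for general $i$ one reconstructs, from the support and sign pattern of $\mathbf{M}_i$ together with the diagonal incidence counts and the rank/inclusion constraints of \cref{def:cc_laplacian}, which sets of rank-$0$ cells are exactly the faces of a common rank-$i$ cell. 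Since an isomorphism of CCs is precisely a bijection of cells preserving incidence and rank, two complexes with the same blocks (up to a common rank-$0$ relabeling) are isomorphic.

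For the second assertion, suppose $\mathbf{L}' = \bPi \mathbf{L} \bPi^{\top}$ for an orthogonal $\bPi$ and that $\mathbf{L}'$ is again a CC Laplacian. Then $\mathbf{L}$ and $\mathbf{L}'$ are cospectral and $\bPi$ maps the eigenspaces of $\mathbf{L}$ isometrically onto those of $\mathbf{L}'$. The key point is a rigidity claim: among orthogonal matrices, only (signed) permutation matrices can carry one integer-structured CC Laplacian to another, because the admissible $0/{\pm}1$ sign pattern and integer entries of $\mathbf{\delta}_i\mathbf{\delta}_i^{\top}$ leave no room for a genuinely ``rotated'' realization; the global signs cancel on the diagonal of $\bPi \mathbf{M}_i \bPi^{\top}$ and are pinned down off-diagonal by the sign convention. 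A permutation of the standard basis is exactly a relabeling of the rank-$0$ cells of $C$; propagating it through the blocks $\mathbf{M}_i$, hence through the incidence matrices, yields a bijection of all cells of $C$ onto those of $C'$ preserving incidence and rank, i.e.\ the desired unique isomorphism.

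The main obstacle I anticipate is the reconstruction step above — recovering the rank-$i$ cells (a hypergraph) from the symmetrized, multiplicity-weighted co-incidence matrix $\mathbf{M}_i$ — since in full generality pairwise co-incidence data need not pin down a hypergraph. I would handle this using the extra information the setting provides beyond a bare $2$-section: the diagonal of $\mathbf{M}_i$ records, for each rank-$0$ cell, the exact number of rank-$i$ cells above it; the sign convention in \cref{def:cc_laplacian} (one $+1$, the rest $-1$ per column) constrains which entries are positive versus negative; and the order-preserving rank function together with condition~1 of the CC definition forces the faces of a rank-$i$ cell to be lower-rank cells already reconstructed at earlier stages. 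Processing the blocks in increasing rank and carrying these constraints as an inductive invariant is what I expect makes the reconstruction — and hence uniqueness up to isomorphism — go through; a secondary delicate point is making the ``orthogonal $\Rightarrow$ signed permutation'' rigidity argument fully precise when $\mathbf{L}$ has repeated eigenvalues.
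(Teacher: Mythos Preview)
Your proposal takes a genuinely different and far more detailed route than the paper. The paper's argument treats only the forward direction: since the complex is fixed, each $\delta_k$ is determined up to column signs and ordering, so each $\delta_k\delta_k^\top$ is fixed, and therefore $L$ is determined (up to an orthogonal change of basis coming from relabeling). It never invokes the distinct--subset--sum property of $\mathcal{B}$, never attempts to recover the individual blocks $\mathbf{M}_i$ from $L$, and offers no separate argument for the ``moreover'' clause --- it simply concludes with ``unique up to orthogonal equivalence.'' Your recoverability step, using the weight condition on $\mathcal{B}$ to disentangle the rank contributions, and the subsequent reconstruction of the rank-$i$ incidence data from $\mathbf{M}_i$, go well beyond what the paper supplies and are the natural ingredients if one wants the converse direction to have actual content.

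However, the step you flag as a ``secondary delicate point'' is in fact the primary obstruction, and it cannot be repaired as stated. The rigidity claim that among orthogonal matrices only signed permutations carry one CC Laplacian to another already fails at rank~$1$: any pair of cospectral, non-isomorphic graphs furnishes Laplacians $L$, $L'$ with $L' = \bPi L \bPi^{\top}$ for some orthogonal $\bPi$ (identical spectra imply orthogonal similarity), both are CC Laplacians by the preceding corollary, yet no permutation relates them. Your program for the second assertion therefore collapses exactly here; the reconstruction-from-blocks argument survives only under the strictly stronger hypothesis that $\bPi$ is a permutation. The paper's proof does not confront this issue either --- it asserts the conclusion without deriving isomorphism --- so you have not overlooked an idea present in the paper, but you should recognise that the route you chose runs into a genuine mathematical wall rather than a merely technical one.
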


\begin{proof}
Let \( \mathcal{C} \) be a fixed combinatorial complex with a unique and fixed set of cells. Since the complex structure is fixed, the incidence matrices \( \delta_k \) between cells of adjacent ranks are uniquely determined (up to orientation and indexing). Therefore, each term \( \delta_k^\top \delta_k \) and \( \delta_k \delta_k^\top \) used in constructing Laplacian operators is fixed for the given complex.

Consider the construction of a Laplacian acting on \( 0 \)-cells:
\[
L = \sum_r \delta_r\delta_r^\top
\]

If \( B \) denotes  \( \delta_k \), then \( \Delta_k \) involves terms of the form \( BB^\top \), which define symmetric positive semi-definite matrices. If another matrix \( B' \) satisfies \( B'B'^\top = BB^\top \), then it must hold that \( B' = BQ \) for some orthogonal matrix \( Q \in \mathbb{R}^{r \times r} \), assuming \( B \) has full rank.

Thus, any such factorisation is unique up to an orthogonal transformation. Furthermore, all the individual summands are uniquely weighted. Because the combinatorial structure fixes the incidence relations, the overall Laplacian operator \( \Delta_k \) is determined uniquely by a basis transformation on intermediate rank cells.

Furthermore, since the Laplacian acts on fixed \( k \)-cells, and any ambiguity from orientation or ordering of higher-rank cells affects all terms consistently, the resulting operator \( \Delta_k \) is unique up to a consistent transformation (e.g., permutation or rotation), which does not affect its spectrum.

Therefore, the Laplacian \( L \) for a fixed combinatorial complex is unique up to orthogonal equivalence.
\end{proof}

\begin{lemma}\textbf{Non-uniqueness of Hodge Laplacians on CC} 
\label{lem:non_unique}Hodge Laplacians on Combinatorial complexes are not unique, meaning there exists a pair of Combinatorial Complexes which share a Hodge Laplacian, but are not isomorphic
\end{lemma}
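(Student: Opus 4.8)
The plan is to prove the statement by exhibiting an explicit pair of combinatorial complexes, essentially the one drawn in \cref{fig:counter_example_hodge}. I would fix a small common vertex set $S = \{1,2,3\}$ and build two complexes that agree on \emph{all} of their rank-$0$ and rank-$1$ cells but differ only in a single high-rank cell. Concretely, take $C_2$ to consist of the singletons $\{1\},\{2\},\{3\}$ (rank $0$) together with $\{1,2\}$ of rank $1$, and let $C_1$ be $C_2$ with the extra cell $\{1,2,3\}$ of rank $4$ adjoined; for a fully symmetric comparison one may instead equip $C_2$ with its own isolated rank-$4$ cell, say $\{1,3\}$, so that the two complexes occupy the same set of ranks. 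First I would check that both are legitimate combinatorial complexes: every singleton is present, and the rank function is order-preserving, since the only nontrivial containment $\{1,2\}\subseteq\{1,2,3\}$ respects $1 \le 4$.

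Next I would establish non-isomorphism: an isomorphism must be a bijection of cell sets preserving rank and incidence, but in $C_1$ the rank-$1$ cell $\{1,2\}$ is a face of a rank-$4$ cell, whereas in $C_2$ the vertex $\{3\}$ has no coface (or, in the symmetric variant, $\{1,2\}$ is not a face of $C_2$'s rank-$4$ cell), and no bijection can reconcile these incidence patterns. The core of the proof is then to compute the Hodge Laplacians $\Delta_k = \partial_{k+1}\partial_{k+1}^{\top} + \partial_k^{\top}\partial_k$ for every $k$ and show they agree. The key observation is that the adjoined rank-$4$ cell is invisible to the Hodge construction: it can enter only through $\partial_4\colon C^4\to C^3$ and $\partial_5\colon C^5\to C^4$, but $C^3=C^5=\emptyset$, so $\partial_4=\partial_5=0$. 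Hence every $\Delta_k$ is unchanged, and in particular $\Delta_0 = \partial_1\partial_1^{\top}$ --- the analogue of the graph Laplacian that actually feeds the descriptors --- is literally the same matrix for $C_1$ and $C_2$. For contrast, and to recover the separation recorded in \cref{thm:hodge_expressiveness}, I would note that the Laplacian of \cref{def:cc_laplacian} contains the term $b_4\,\delta_4\delta_4^{\top}$ built from the rank-$0$-to-rank-$4$ incidence $\delta_4$, which is nonzero and differs between $C_1$ and $C_2$, so the CC Laplacian does tell them apart.

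The main obstacle is not difficulty but care: the example must be arranged so the high-rank cell is simultaneously invisible to \emph{all} Hodge Laplacians, which forces the complex to contain no rank-$3$ (and no rank-$5$) cells and requires being explicit that, under the boundary convention, a rank-$4$ cell whose faces all have rank at most $1$ has an empty rank-$3$ boundary, so $\partial_4=0$. A minor bookkeeping point is the dimension mismatch if $C_2$ omits the rank-$4$ cell entirely --- then $C_1$ carries a trivial $\Delta_4$ acting on a one-dimensional space while $C_2$ carries none --- which is resolved either by restricting the comparison to the shared operators $\Delta_0,\dots,\Delta_3$, or by the symmetric variant in which both complexes carry an orphaned rank-$4$ cell and all Hodge Laplacians then coincide exactly.
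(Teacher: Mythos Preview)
Your proposal is correct and follows essentially the same approach as the paper: both exhibit the counterexample of \cref{fig:counter_example_hodge}, where a single rank-$4$ cell is invisible to every Hodge Laplacian because the complex contains no rank-$3$ or rank-$5$ cells, while the CC Laplacian picks it up through the $b_4\,\delta_4\delta_4^{\top}$ term. Your treatment is in fact more careful than the paper's, since you explicitly verify the CC axioms, argue non-isomorphism at the level of incidence structure, and address the dimension-mismatch at rank $4$ via the symmetric variant with an orphaned rank-$4$ cell in each complex.
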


\begin{figure}
\centering
        \includegraphics[width=0.5\linewidth]{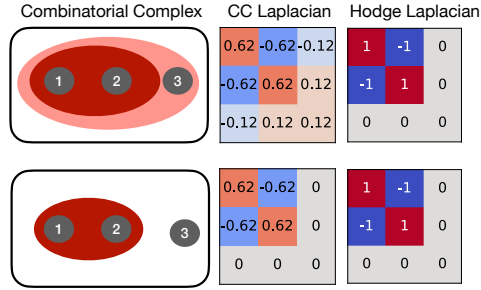}
    \caption{Presenting two Combinatorial Complexes with their CC and Hodge Laplacian. While the CC Laplacian differs, the Hodge Laplacian is the same for both complexes.}
    \label{fig:counter_example_hodge}
\end{figure}
\begin{proof}

We prove the corollary by providing a counterexample, as illustrated in \cref{fig:counter_example_hodge}. The figure depicts two non-isomorphic combinatorial complexes. In both complexes, cells 1, 2, and 3 are of rank 0, and cells 1 and 2 are connected via a rank-1 cell. However, cell 3 is additionally connected to the rest of the structure in the first complex through a higher-order cell of rank 4.

This higher-order connection introduces a structural difference that breaks isomorphism between the two complexes. The \emph{Combinatorial Complex (CC) Laplacian}, defined in Definition 3.1, captures this distinction by incorporating interactions across all ranks. Specifically, it reflects the influence of the rank-4 cell, which connects otherwise disconnected components at rank 0.

In contrast, the \emph{Hodge Laplacian} fails to distinguish the two complexes. Since the rank-4 cell is not incident to any rank-3 or rank-5 cell, its contribution to the Hodge Laplacian vanishes (as it produces zero under boundary and coboundary operators). Consequently, the Hodge Laplacians of both complexes are identical.

This example demonstrates that the Hodge Laplacian is not a unique or complete descriptor of combinatorial complex structure. In contrast, the CC Laplacian distinguishes between them, establishing its greater expressiveness and discriminative power.
\end{proof}

\begin{corollary}
\label{thm:hodge_expressiveness}
\textbf{Hodge Laplacian Expressiveness}    On Combinatorial Complexes, the Laplacian in Definition 3.1 is strictly more expressive than the Hodge Laplacian, and on Simplicial Complexes they are equally expressive.
\end{corollary}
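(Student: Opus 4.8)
The plan is to split the proof along the two assertions, reusing the machinery already in place. For the first part---strict superiority on general combinatorial complexes---I would simply invoke \cref{lem:non_unique}: that lemma exhibits a pair of non-isomorphic CCs whose Hodge Laplacians coincide, while \cref{them:uniqueness_of_laplacians} shows the CC Laplacian of \cref{def:cc_laplacian} is unique to the complex up to orthogonal (spectral) equivalence. Hence the CC Laplacian distinguishes a pair that the Hodge Laplacian cannot, so it is at least as expressive on every CC and strictly more expressive on some; this gives ``strictly more expressive.'' I would phrase expressiveness as a partial order on descriptors: descriptor $D$ is at least as expressive as $D'$ if $D(C)=D(C')$ implies $D'(C)=D'(C')$ (up to the relevant equivalence), and strictly more expressive if in addition the converse fails on some pair. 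The first direction of the ``at least'' claim (CC Laplacian agreement forces Hodge agreement, or at least does not lose any information the Hodge Laplacian sees) is the point that needs a short argument rather than a bare citation.

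For the second part---equal expressiveness on simplicial complexes---the key structural fact is that in a simplicial (more generally cell) complex every $k$-cell is built exactly from cells of rank $k-1$, so the boundary operators $\partial_k$ are precisely the incidence matrices $\delta$ between consecutive ranks and there are no ``skip-rank'' incidences of the kind exploited in \cref{lem:non_unique}. I would argue that in this setting both the CC Laplacian $\mathbf{L}=\sum_i b_i \delta_i \delta_i^\top$ and the collection of Hodge Laplacians $\{\Delta_k\}$ are determined by, and determine, the same data---the full sequence of incidence matrices $\{\delta_k\}$ up to simultaneous permutation. Concretely: given all $\delta_k$ one can form both; conversely, the rank-$0$-to-rank-$i$ incidence $\delta_i$ in a simplicial complex factors through the consecutive boundary maps, so it is recoverable from the $\partial_k$'s appearing in the Hodge Laplacians, and the distinct-subset-sum property of $\mathcal{B}$ lets one read off each $\delta_i \delta_i^\top$ block from $\mathbf{L}$. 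Two simplicial complexes are isomorphic iff these incidence data agree up to relabelling, so the two descriptors separate exactly the same pairs.

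The order of steps I would follow: (1) fix the definition of ``expressiveness'' as a separation pre-order on isomorphism classes; (2) prove the ``at least as expressive'' direction in general by showing CC-Laplacian equivalence implies Hodge-Laplacian equivalence (the CC Laplacian records all $\delta_i \delta_i^\top$, hence in particular the rank-$0$/rank-$1$ part, and more generally its restriction to ranks appearing in $\partial_{k}$ recovers the Hodge terms); (3) invoke \cref{lem:non_unique} for the strict separation, completing part one; (4) for simplicial complexes, prove mutual recoverability of the incidence data from each descriptor using the subset-sum uniqueness of $\mathcal{B}$ and the consecutive-rank structure of simplices; (5) conclude equal expressiveness.

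The main obstacle is step (2) together with the precise formulation of what ``the Hodge Laplacian distinguishes'' means for a combinatorial complex: since the Hodge Laplacian is really a graded family $\{\Delta_k\}$ rather than a single operator, one must be careful that the CC Laplacian---a single operator on rank-$0$ cochains---actually dominates all of this graded information, and not merely the rank-$0$ piece. I expect the honest version of the argument to require restricting the comparison to the information the Hodge Laplacian carries \emph{about rank-0 structure} (consistent with the paper's stated design choice that the CC Laplacian is defined in terms of rank-$0$ cells), or alternatively to argue that on the complexes where the two must be compared, the higher $\Delta_k$ add nothing beyond what $\{\delta_i \delta_i^\top\}_i$ already encodes. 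Pinning down this formulation so that both the ``strictly more'' and the ``equally'' halves are simultaneously true is the delicate part; once the definitions are fixed, the simplicial-case equivalence in step (4) is essentially bookkeeping with the subset-sum property.
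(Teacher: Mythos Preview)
Your plan matches the paper's proof: cite \cref{lem:non_unique} (together with \cref{them:uniqueness_of_laplacians}) for strict superiority on general CCs, and for simplicial complexes argue that both operators are fully determined by the incidence structure and hence equally discriminating. The paper's argument for the simplicial case is even terser than yours---it simply observes that on cell complexes both the CC Laplacian and each $\Delta_k = d_{k-1}^\top d_{k-1} + d_k d_k^\top$ are uniquely fixed by the boundary maps and concludes, without your mutual-recoverability step via the subset-sum property of $\mathcal{B}$.

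Your worry about step~(2)---that ``strictly more expressive'' also requires showing CC-Laplacian agreement forces Hodge agreement, and that the graded family $\{\Delta_k\}$ might carry information a single rank-$0$ operator misses---is legitimate, but the paper does not address it either: its proof of the strict part consists solely of the reference to \cref{lem:non_unique}. So your proposal is at least as complete as the published argument, and your suggested resolution (restrict the comparison to rank-$0$ information, in line with the paper's design choice) is the natural way to make both halves of the corollary hold simultaneously; you are simply holding yourself to a higher standard than the paper does.
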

\begin{proof}
  We have shown the first part of the proof in \cref{lem:non_unique}.

  We now show that the CC Laplacian and the Hodge Laplacian uniquely capture structural differences in cell complexes.

This follows directly from the CC Laplacian's construction: as shown earlier, the Laplacian is uniquely determined by the complex's combinatorial structure. Since the incidence relations between cells are fixed, the CC Laplacian is uniquely defined for any combinatorial complex.

Similarly, the Hodge Laplacian is uniquely defined on cell complexes. First, we observe that the coboundary operator \( d_k \), which maps \( k \)-cochains to \( (k+1) \)-cochains, is uniquely determined by the cell structure and chosen orientation. Given this, the Hodge Laplacian,
\[
\Delta_k = d_{k-1}^\top d_{k-1} + d_k d_k^\top,
\]
is also uniquely defined for each \( k \).

Thus, both Laplacians yield unique operators for any fixed cell complex structure. This completes the proof.

\end{proof}

\paragraph{Laplacian interpretations} 
\begin{theorem}
\textbf{Smoothness}
    Let $L$ be the Laplacian descriptor for the Combinatorial Complex $C$. We then define a function $f: C^0 \rightarrow \mathbb{R}$. Based on this:
\begin{align}
    f^\top L f
    \label{eq:smoothness}
\end{align}
expresses the smoothness of a function defined on rank-0 cells of the combinatorial complex. In particular, \cref{eq:smoothness} can be reformulated as follows, where \( B = \sum_{i=0}^R \beta_i \):
\begin{align}
    B\sum_{i,j} w_{i,j}(f_i - f_j)^2,
    \label{eq:smoothness_itemised}
\end{align}
\end{theorem}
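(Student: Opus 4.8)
The plan is to expand $f^\top L f$ directly using \cref{def:cc_laplacian} and the standard identity that each term $\delta_i \delta_i^\top$ is a (weighted) Laplacian-like matrix whose quadratic form decomposes into a sum over pairs of rank-0 cells that co-occur in a common rank-$i$ cell. First I would write
\begin{align}
f^\top L f = f^\top \Bigl(\sum_{i=1}^R b_i\, \delta_i \delta_i^\top\Bigr) f = \sum_{i=1}^R b_i\, f^\top \delta_i \delta_i^\top f = \sum_{i=1}^R b_i\, \|\delta_i^\top f\|^2.
\end{align}
Then, for a fixed rank-$i$ cell $y$ with incident rank-0 cells $x_1,\dots,x_m$, the corresponding column of $\delta_i$ has one entry $+1$ (say at the distinguished cell $x_k$) and the rest $-1$, so the $y$-th coordinate of $\delta_i^\top f$ is $f_{x_k} - \sum_{j\neq k} f_{x_j}$; squaring and summing over all cells $y$ of rank $i$ gives a sum of squared differences among incident rank-0 cells, which after collecting over pairs $(x_a,x_b)$ can be written as $\sum_{a,b} c^{(i)}_{ab}(f_a - f_b)^2$ for suitable nonnegative coefficients $c^{(i)}_{ab}$ that are positive exactly when $x_a$ and $x_b$ lie in a common rank-$i$ cell. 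Collapsing the outer sum over $i$ and setting $w_{ab} := \sum_i b_i c^{(i)}_{ab} / B$ then yields \cref{eq:smoothness_itemised}, with the key observation that $w_{ab}=0$ iff no cell of any rank joins $x_a$ and $x_b$.

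The main obstacle—indeed the only nontrivial point—is the cross-term bookkeeping when a rank-$i$ cell has more than two incident rank-0 cells: unlike the graph case, where each column of the incidence matrix has exactly one $+1$ and one $-1$ so that $\|\delta^\top f\|^2 = \sum_{e=(a,b)}(f_a-f_b)^2$ exactly, here a single cell contributes a term $(f_{x_k} - \sum_{j\neq k} f_{x_j})^2$ whose expansion produces diagonal contributions $f_{x_j}^2$ and cross terms $f_{x_a} f_{x_b}$ with signs depending on whether each index equals the distinguished one. I would handle this by expanding the square, grouping the pure quadratic terms against the mixed terms, and verifying that the result reorganizes into a nonnegative combination of $(f_a-f_b)^2$; one must check that the off-diagonal pattern is consistent with an edge-weighted graph Laplacian on the ``co-incidence'' graph at rank $i$. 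The precise constants $\beta_i$ (and hence $B=\sum_i \beta_i$) emerge from this accounting; I would note that the statement as written absorbs the per-cell combinatorial multiplicities into the $w_{ab}$ and into a single global factor $B$, so the claim is really that \cref{eq:smoothness} equals $B\sum_{i,j} w_{ij}(f_i-f_j)^2$ up to this relabeling, which is what the expansion delivers.

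Finally I would record the interpretive consequences that the theorem statement already anticipates: the quadratic form is manifestly nonnegative (each $\|\delta_i^\top f\|^2 \geq 0$ and each $b_i>0$), so $L$ is positive semidefinite; it vanishes when $f$ is constant on each connected component of the co-incidence structure; and small values penalize large variation of $f$ across cells that share a common higher-rank relation, which is exactly the discrete Dirichlet-energy reading. The full calculation is routine once the single-cell expansion above is pinned down, so in the appendix I would present the rank-$i$ single-cell identity as a short lemma and then sum.
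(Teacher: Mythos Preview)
Your plan is more careful than the paper's own argument, which simply asserts that $L$ has the entrywise shape of a weighted graph Laplacian ($L_{ii}=\sum_k w_{ik}$, $L_{ij}=-w_{ij}$) and then runs the textbook symmetrisation. But the step you flag as ``the only nontrivial point'' does not go through, and for the same underlying reason the paper's assumed form need not hold.

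Take a single rank-$i$ cell $y$ containing $m\geq 3$ rank-0 cells, with distinguished vertex $x_k$. Its contribution to $\|\delta_i^\top f\|^2$ is
\[
\Bigl(f_{x_k}-\sum_{j\neq k} f_{x_j}\Bigr)^2,
\]
and this quadratic form is \emph{not} translation-invariant: under $f\mapsto f+c$ it becomes $\bigl(f_{x_k}-\sum_{j\neq k} f_{x_j}-(m-2)c\bigr)^2$. Any linear combination of terms $(f_a-f_b)^2$, with coefficients of either sign, \emph{is} translation-invariant. Hence no choice of your $c^{(i)}_{ab}$ can realise the identity. Concretely, for $m=3$ and $f_1=f_2=f_3=1$ the left side equals $(1-1-1)^2=1$ while every $\sum \alpha_{ab}(f_a-f_b)^2$ vanishes. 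Equivalently, the off-diagonal entry $(\delta_i\delta_i^\top)_{ab}$ equals $+1$ whenever neither $a$ nor $b$ is the distinguished vertex, so $\delta_i\delta_i^\top$ has row sums $2-m\neq 0$ and is not a graph Laplacian at all.

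Thus your proposed ``short lemma'' cannot be proved: the reorganisation into pairwise squared differences is impossible once any higher-rank cell contains more than two rank-0 members. The identity \cref{eq:smoothness_itemised} holds only in the graph regime ($m=2$ for every cell), which is exactly the case the paper's proof implicitly assumes. If you want a smoothness statement for general CCs you would need either a different sign convention on $\delta_i$ that forces zero column sums, or to retreat to the weaker claim $f^\top L f=\sum_i b_i\|\delta_i^\top f\|^2\geq 0$ without the pairwise-difference rewriting.
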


\begin{proof}
We begin by expanding the quadratic form:
\[
f^\top L f = \sum_{i=1}^n \sum_{j=1}^n f_i L_{ij} f_j
\]

By the Laplacian definition:
\[
L_{ij} = 
\begin{cases}
\sum_{k \neq i} w_{ik} & \text{if } i = j \\
-w_{ij} & \text{if } i \neq j
\end{cases}
\]

Substituting, we get:
\[
f^\top L f = \sum_{i=1}^n f_i \left( \sum_{j \neq i} (-w_{ij}) f_j + L_{ii} f_i \right)
= \sum_{i=1}^n \left( L_{ii} f_i^2 - \sum_{j \neq i} w_{ij} f_i f_j \right)
\]

Now, using symmetry \( w_{ij} = w_{ji} \), we can symmetrize:
\[
f^\top L f = \sum_{i < j} w_{ij}(f_i^2 + f_j^2 - 2f_i f_j)
= \sum_{i < j} w_{ij}(f_i - f_j)^2
\]

This concludes the proof.
\end{proof}

\begin{theorem} 
\textbf{HKS uniqueness}
    Let \( L \) and \( L' \) be two Laplacians such that \( L' \neq \mathbf{\Pi} L \mathbf{\Pi}^{\top} \) for any orthogonal matrix \( \mathbf{\Pi} \). Then the corresponding Heat Kernel Signature (HKS) descriptors derived from \( L \) and \( L' \) are distinct.
\end{theorem}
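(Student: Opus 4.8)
The plan is to prove the contrapositive: if the HKS descriptors derived from $L$ and $L'$ coincide, then $L' = \mathbf{\Pi} L \mathbf{\Pi}^\top$ for some orthogonal $\mathbf{\Pi}$. The starting observation is that the HKS descriptor, as defined in \cref{def:hks_cc}, records for each rank-0 cell $c$ the values $K_{t_k}(c) = e_c^\top \exp(-t_k L)\, e_c = \sum_i e^{-t_k \lambda_i}\, \phi_i(c)^2$ across the chosen diffusion times $t_1,\dots,t_d$. So the first step is to recover, from the collection of these diagonal heat-kernel values (viewed as functions of $t$), the spectral data of $L$: summing $K_t(c)$ over all rank-0 cells gives $\operatorname{tr}\exp(-tL) = \sum_i e^{-t\lambda_i}$, and because distinct eigenvalues produce linearly independent exponentials $e^{-t\lambda_i}$, matching this trace function across sufficiently many (or all) $t$ forces $L$ and $L'$ to have the same eigenvalues with the same multiplicities. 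I would state explicitly the regime in which the $t_k$ suffice --- either $d$ large enough relative to $n$, or the idealized ``all $t$'' version --- since with only finitely many sample times one needs $d \ge n$ (or an interpolation/analyticity argument) to pin down the spectrum.

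Next, having equal spectra, I would invoke \cref{them:uniqueness_of_laplacians}: the Laplacian of a combinatorial complex is uniquely determined up to orthogonal conjugation, and two Laplacians that are related by $L' = \mathbf{\Pi} L \mathbf{\Pi}^\top$ correspond to isomorphic complexes. The chain of reasoning is: equal HKS $\Rightarrow$ equal spectrum $\Rightarrow$ (by the structural uniqueness result, since a Laplacian with a given spectrum arising from a CC is pinned down by that CC) the underlying complexes are isomorphic $\Rightarrow$ $L$ and $L'$ are orthogonally conjugate via the permutation matrix realizing the isomorphism. This contradicts the hypothesis $L' \neq \mathbf{\Pi} L \mathbf{\Pi}^\top$, completing the contrapositive. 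I would present this as the two-step structure already announced in the excerpt: (1) non-isomorphic complexes have non-similar Laplacians, and (2) non-similar Laplacians produce distinct diffusion signatures.

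The main obstacle I anticipate is step (2) done honestly: going from ``not orthogonally conjugate'' to ``distinct HKS'' is \emph{not} purely a statement about spectra, because two Laplacians can be cospectral without being conjugate, yet the theorem as stated allows only $\mathbf{\Pi}$-conjugacy as the equivalence. The diagonal-of-heat-kernel data $K_t(c)$ carries strictly more than the spectrum --- it also sees the squared eigenfunction values $\phi_i(c)^2$ at each cell --- so the real argument must exploit that extra information: if all the functions $t \mapsto K_t(c)$ agree (after matching up cells, which is where an implicit alignment/permutation enters), then for each eigenvalue $\lambda$ the vector of values $\big(\sum_{i:\lambda_i=\lambda}\phi_i(c)^2\big)_c$ agrees, i.e. the diagonals of the spectral projectors agree. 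Promoting ``equal projector diagonals for every eigenvalue'' to ``the whole operators are orthogonally conjugate'' is the delicate point; I would either restrict to the simple-spectrum case (where each projector is rank one and its diagonal entries are $\phi_i(c)^2$, recovering $\phi_i$ up to sign and hence $L$ up to a signed permutation) or lean on \cref{them:uniqueness_of_laplacians} to supply the missing rigidity, and I would flag that the clean statement really wants the combinatorial-complex structure (not an arbitrary symmetric matrix) so that the earlier uniqueness theorem applies.
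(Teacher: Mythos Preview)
Your contrapositive route --- sum the diagonal HKS values over all rank-$0$ cells to recover $\operatorname{tr}\exp(-tL)=\sum_i e^{-t\lambda_i}$, invoke linear independence of the functions $t\mapsto e^{-t\lambda}$ to pin down the multiset of eigenvalues, then conclude orthogonal conjugacy --- is valid and in one respect more careful than the paper's own argument. The paper works directly with the full heat-kernel matrix $K_t=\exp(-tL)$: it eigendecomposes $L=\Phi\Lambda\Phi^\top$, notes that $\lambda\mapsto e^{-t\lambda}$ is injective, and concludes that $K_t$ is uniquely determined by $L$ (so that equal kernels force equal spectra). It never explicitly explains why equality of the \emph{diagonals} --- which is all the HKS records --- suffices; your trace step supplies exactly that bridge.

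The obstacle you devote your final paragraph to, however, does not exist. For real symmetric matrices --- and both $L$ and $L'$ are symmetric by construction --- cospectral is \emph{equivalent} to orthogonally similar: if $L=\Phi\Lambda\Phi^\top$ and $L'=\Psi\Lambda\Psi^\top$ share the same $\Lambda$, then $L'=\Pi L\Pi^\top$ with $\Pi=\Psi\Phi^\top$ orthogonal. So once your step~(1) yields ``same spectrum,'' step~(2) is a one-line piece of linear algebra; there is no need to analyse the projector diagonals $\sum_{i:\lambda_i=\lambda}\phi_i(c)^2$, to restrict to simple spectrum, or to invoke \cref{them:uniqueness_of_laplacians} for extra rigidity. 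Your worry that ``two Laplacians can be cospectral without being conjugate'' is only an issue for non-normal operators. Drop that paragraph and your argument is complete --- and, on the diagonal-versus-full-kernel point, tighter than the paper's.
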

\begin{proof}
We assume the uniqueness of the Laplacian \( L \), as established in the previous theorem. It remains to show that the corresponding diffusion kernel is uniquely determined by the spectrum of \( L \).

Let \( L = \Phi \Lambda \Phi^\top \) be the eigendecomposition of the symmetric Laplacian, where \( \Phi \in \mathbb{R}^{n \times n} \) is an orthonormal matrix of eigenvectors and \( \Lambda = \mathrm{diag}(\lambda_1, \dots, \lambda_n) \) is the diagonal matrix of eigenvalues. The heat diffusion kernel at time \( t > 0 \) is defined as:

\[
K_t := \Phi \, \mathrm{diag}(e^{-t\lambda_1}, \dots, e^{-t\lambda_n}) \, \Phi^\top
\]

We aim to show that this kernel is unique for a fixed Laplacian. First, note that the exponential function \( x \mapsto e^{-tx} \) is strictly decreasing and injective on \( \mathbb{R} \). Therefore, the map \( \lambda_i \mapsto e^{-t\lambda_i} \) preserves uniqueness of the spectrum.

Since the eigenvectors \( \Phi \) are also uniquely determined up to orthogonal transformations (and these cancel in the product \( \Phi \Phi^\top \)), the matrix \( K_t \) is uniquely determined by \( L \).

Thus, the diffusion kernel \( K_t \) is uniquely defined for a given Laplacian and a fixed diffusion time \( t \). If the same kernel were to arise from two distinct spectra \( \Lambda \neq \Lambda' \), then we would obtain \( e^{-t\lambda_i} = e^{-t\lambda_i'} \) for some \( i \), contradicting the injectivity of the exponential map.

Hence, the diffusion pattern is uniquely determined, completing the proof.
\end{proof}

\begin{corollary} \textbf{Expressiveness}
Given two combinatorial complexes with distinct input descriptors, it is possible to learn a function using a Universal Function Approximator (UFA) approach that effectively distinguishes between them. This means we can determine any WL classes by theoretical design. \looseness=-1
\end{corollary}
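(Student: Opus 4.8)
The plan is to reduce the statement to a routine application of universal approximation, letting the previous \textbf{HKS uniqueness} theorem do the structural work. First I would fix two combinatorial complexes $C$ and $C'$ that we want to tell apart. By hypothesis, and — when $C \not\cong C'$ — by the HKS uniqueness theorem together with \cref{them:uniqueness_of_laplacians}, the input representations produced by the pipeline around \cref{def:hks_cc} (the collection of per-cell HKS vectors of the rank-$0$ cells, concatenated with their features and pushed through the linear embedding of \cref{fig:training_pipeline}) are distinct. Denote these two inputs by $X \neq X'$, viewed up to relabelling of the rank-$0$ cells, i.e. as two distinct points of the quotient on which the permutation-invariant readout acts.

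Next I would exhibit a target function that the network must imitate. Since the bounded $\sin/\cos$ encoding $\mathbf{O}$ places all inputs in a compact domain $\mathcal{K}$ of the quotient, and $X \neq X'$ there, I can pick a continuous permutation-invariant function $g\colon \mathcal{K} \to \mathbb{R}$ with $g(X) = 1$ and $g(X') = 0$ — for instance a normalised bump in the quotient metric centred at $X$; invariance is automatic because everything factors through the quotient. I then invoke the universal approximation property of the backbone (transformers \cite{yun2019transformers}, or MLPs \cite{hornik1989multilayer}, composed with the invariant readout) to obtain parameters $\theta$ with $\sup_{\mathcal{K}} |f_\theta - g| < 1/3$. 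Then $f_\theta(X) > 2/3 > 1/3 > f_\theta(X')$, so the learned function separates $C$ from $C'$, which is the first assertion.

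For the clause ``we can determine any WL classes'', I would observe that any finite dataset of combinatorial complexes partitions into finitely many isomorphism classes, and by the above any two representatives of different classes have distinct inputs; hence there is a continuous function on $\mathcal{K}$ (a finite sum of bump functions) sending each representative to its prescribed label, which the UFA again approximates within margin $1/3$. Since the partition induced by any $k$-WL colouring is coarser than the isomorphism partition — isomorphism being the ``$\infty$-WL'' limit — every WL colouring, and in fact the full isomorphism relation, is realisable by our model, giving the claimed maximal expressiveness.

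The main obstacle is not the approximation step, which is standard, but guaranteeing that the readout is \emph{injective on isomorphism classes}: that the distinctness of the full heat-kernel operators established earlier survives the diagonal restriction $K_t(c) = e_c^\top K_t e_c$ and the permutation-invariant aggregation into a single complex-level vector. This is exactly the content imported from the HKS uniqueness theorem (and the hypothesis ``distinct input descriptors''), so the remaining work is bookkeeping: (i) compactness of the encoded domain so universal approximation applies, and (ii) the margin argument turning approximate equality into strict separation. I would also note that $f_\theta$ only has to separate a \emph{finite} set of inputs, so no uniform statement over all complexes is needed and the corollary is unconditional given the earlier results.
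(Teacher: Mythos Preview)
Your proposal is correct and follows the same route as the paper --- distinct inputs plus universal approximation yields a separating function --- though you supply technical details (compactness via the bounded $\sin/\cos$ encoding, an explicit bump-and-margin construction, the permutation-invariant quotient, and the finite-dataset argument for the WL clause) that the paper's much terser proof omits. The paper simply asserts that a UFA admits parameters with $f_\theta(X_1) \neq f_\theta(X_2)$ whenever $X_1 \not\equiv X_2$ and stops there.
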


\begin{proof}
Let \( \mathcal{C}_1 \) and \( \mathcal{C}_2 \) be two combinatorial complexes. Assume that their node-level input features (e.g., heat kernel signatures) are such that \( \mathcal{C}_1 \not\cong \mathcal{C}_2 \Rightarrow X_1 \not\equiv X_2 \), i.e., the inputs are distinctive up to isomorphism.

Let \( f_\theta \) be a neural network modelled as a Universal Function Approximator (UFA), which takes the input \( X \) and computes an output \( f_\theta(X) \). Since UFAs can approximate any continuous function to arbitrary precision, there exists a parameterisation \( \theta \) such that:
\[
f_\theta(X_1) \neq f_\theta(X_2) \quad \text{whenever } X_1 \not\equiv X_2
\]

Hence, as long as the inputs are distinctive concerning isomorphism, the network can be trained to produce unique outputs for each non-isomorphic complex. This implies that the method is expressive enough to distinguish between combinatorial complexes up to isomorphism.
\end{proof}

\end{document}